\newif\ifISWORD
\newif\ifISARXIV
\newif\ifSHOWNEWWORK
\title{Greedy-Step Off-Policy Reinforcement Learning}
\def\eqref#1{equation~\ref{#1}}
\def\1{\bm{1}}
\DeclareMathAlphabet{\mathsfit}{\encodingdefault}{\sfdefault}{m}{sl}
\SetMathAlphabet{\mathsfit}{bold}{\encodingdefault}{\sfdefault}{bx}{n}
\newcommand{\E}{\mathbb{E}}
\newcommand{\R}{\mathbb{R}}
\DeclareMathOperator*{\argmax}{arg\,max}
\author[ ]{Yuhui Wang}
\author[ ]{Qingyuan Wu}
\author[ ]{Pengcheng He}%\textdagger
\author[ ]{Xiaoyang Tan}%\textdagger
\affil[ ]{{\tore{College of Computer Science and Technology,Nanjing University of Aeronautics and Astronautics}}}
\affil[ ]{MIIT Key Laboratory of Pattern Analysis and Machine Intelligence}
\affil[ ]{\textit {\{y.wang, wuqingyuan, yukina233, x.tan\}@nuaa.edu.cn}}
\begin{document}

\maketitle

% TODO: 让Nstep越来越小的方法存在一个缺陷，那就是你不知道什么时候应该decay，什么时候不应该decay。
%\greedyStepEquationFull/

\begin{abstract}

%Efficiently propagating credit to responsible actions is a central and challenging task in reinforcement learning. To accelerate information propagation, this paper presents a new method that bridges a highway that allows unimpeded information to flow across long horizons. The key to our method is a newly proposed Bellman equation, called Greedy-Step Bellman Optimality Equation, through which the high-credit information can fast propagate across a long horizon. 
%We theoretically show that the solution of the new equation is exactly the optimal value function and the corresponding operator converges faster than the classical operator. Besides, it leads to a new multi-step off-policy algorithm, which is capable of safely utilizing any off-policy data collected by the arbitrary policy. Experiments reveal that the proposed method is reliable, easy to implement. 
%Moreover, without employing additional components of Rainbow except Double DQN, our method achieves competitive performance with Rainbow on the benchmark tasks.
%Moreover, experiments reveal that the proposed algorithms are reliable, easy to implement and achieves state-of-the-art performance on a series of standard benchmark datasets.

Most of the policy evaluation algorithms are based on the theories of Bellman Expectation and Optimality Equation, which derive two popular approaches - Policy Iteration (PI) and Value Iteration (VI). However, multi-step bootstrapping is often at cross-purposes with and off-policy learning in PI-based methods due to the large variance of multi-step off-policy correction. In contrast, VI-based methods are naturally off-policy but subject to one-step learning.
In this paper, we deduce a novel multi-step Bellman Optimality Equation by utilizing a latent structure of multi-step bootstrapping with the optimal value function. Via this new equation, we derive a new multi-step value iteration method that converges to the optimal value function with exponential contraction rate $\mathcal{O}(\gamma^n)$ but only linear computational complexity. 
Moreover, it can naturally derive a suite of multi-step off-policy algorithms that can safely utilize data collected by arbitrary policies without correction. 
Experiments reveal that the proposed methods are reliable, easy to implement and achieve state-of-the-art performance on a series of standard benchmark datasets.

\end{abstract}

% TODO:举个确定性MDP下有问题的例子

\section{Introduction}

%to meet the demands of a particular task
\T{Multi-step reinforcement learning (RL) is a set of methods that aim to adjust the trade-off of utilization between the observed data of rewards and the knowledge of future return.
% stored in the learned value function.
% so as to meet the demands of a particular task. 
Recent advances on multi-step RL have achieved remarkable empirical success \citep{horgan2018distributed, barth2018distributed}.} 
However, one major challenge of multi-step RL comes from how to achieve the right balance between the two terms.
% by bootstrapping, i.e., $\sum_{n=0}^{N-1} \gamma^n r_{t+n} + \gamma^N \max_{a_{t+N}} Q(s_{t+N, a_{t+N}})$, where $N$ is the \emph{\step/}. 
Such a challenge can be regarded as a kind of data-knowledge trade-off in some sense.
%Such balance can be regarded as a kind of data-knowledge trade-off in some sense
%, as the future return is estimated through a value function representing the knowledge learnt so far. 
Particularly, a large bootstrapping step tends to quickly propagate the information in the data, while a small one relies more on the knowledge stored on the learned value function. 
The classical solution to address this issue is to impose a fixed prior distribution over every possible step, e.g., TD($\lambda$) \citep{sutton2018reinforcement}, GAE($\lambda$) \citep{Schulman2016HighDimensional}. Such a solution often ignores the quality of data and on-going knowledge, which dynamically improves over the learning process. Besides, the prior distribution usually has to be tuned case by case.

Another issue related to multi-step RL is off-policy learning, i.e., its capability to learn from data from other behavior policies.  
Previous research on this is mainly conducted under the umbrella of Policy Iteration (PI) \citep{sutton2018reinforcement}, with the goal to evaluate the value function of a target policy \citep{precup2000eligibility,harutyunyan2016q,munos2016safe, sutton2018reinforcement, Schulman2016HighDimensional}.
% using multi-step off-policy learning. 
%Representative works include  , and Retrace ($\lambda$) \citep{munos2016safe} . 
Despite their success, those methods usually suffer from certain undesired side effects of off-policy learning, e.g., high variance due to the product of importance sampling (IS) ratios, and the restrictive premise of being able to access both the behavior and the target policy (to compute the IS ratios). 
%Most importantly, those methods also require the aforementioned prior distribution on step size, which usually need to be tuned case by case, e.g., TD($\lambda$) \citep{sutton2018reinforcement}, GAE($\lambda$) \citep{sutton2018reinforcement,Schulman2016HighDimensional}.
% TD($\lambda$) \citep{sutton2018reinforcement}, GAE($\lambda$) \citep{Schulman2016HighDimensional}
%Most importantly, those methods need to manually tune the step size (or the prior distribution of every step size, e.g. TD($\lambda$), GAE($\lambda$)) for each task \citep{sutton2018reinforcement, Schulman2016HighDimensional},
%while ignoring the quality of data and on-going knowledge which dynamically improves with the learning process.
%propagate the value of the most promising action
%, without the need of policy evaluation
In contrast with PI, Value Iteration (VI) methods aim to approximate the optimal value function, by propagating the value of the most promising action \citep{sutton2018reinforcement,szepesvari2010algorithms}.
%For these methods, 
The good side is that these methods can safely use data from any behavior policy without any correction. However, it needs to conduct value propagation step-by-step, making them somewhat unnatural for multi-step learning.

%These characteristics of VI make it somewhat unnatural for multi-step learning --- theoretically, this means that it has to search over the whole trajectory space to find one which achieves the highest (multi-step) return. For a method like this, the good side is that it can safely use data from any behavior policy without any off-policy correction, but at the cost of unrealistic computational burden. 
%Very little research \citep{horgan2018distributed, barth2018distributed} in literature address these issues.

In this paper, we aim to accelerate value iteration via multi-step bootstrapping.
Specially, we deduce a novel multi-step Bellman Optimality Equation by utilizing a latent structure of multi-step bootstrapping with the optimal value function. 
Via this new equation, a novel multi-step value iteration method, named \emph{\greedyStepValueIteration/}, is derived.
% that converges to the optimal value function with exponential contraction rate $\mathcal{O}(\gamma^n)$ but only linear computational complexity. 
We theoretically show that the new value iteration method converges to the optimal value function.
Moreover, the contraction rate is $\mathcal{O}(\gamma^{n})$ under some condition ( and $\mathcal{O}(\gamma^{2})$ with a loose condition; $\mathcal{O}(\gamma^{1})$ in the worst case), while the computational complexity is linear.
To the best of our knowledge, this is the first multi-step value iteration method that converges to the optimal value function with an exponential rate while with linear computation complexity.
{Besides}, we theoretically show that the new operator is unbiased with data collected by any behavior policy.
Therefore, it naturally derives multi-step off-policy RL algorithms which are able to utilize off-policy data collected by the arbitrary policy safely.
Moreover, experiments reveal that the proposed algorithms are reliable, easy to implement, and achieve state-of-the-art performance on a series of standard benchmark datasets.

\section{Preliminaries}

%We begin with discussion on Value Iteration (VI) approach and Policy Iteration (PI) approach, which are two {fundamental approaches} of RL.
%, represented by (one-step and multi-step) Q learning.
%, which aims to estimate the optimal value on each state-action.
%Then we discuss the Policy Iteration (PI) approach, represented by Monte Carlo (with or without importance sampling), SARSA, etc.
%, which focuses on evaluate the value of a specific policy and naturally allow multi-step learning.
%We have also provided several new conclusions and a simple example which may help you gain more insight.
%In the next section, we introduce our new method, \T{which aims to estimate the optimal value as value iteration but adaptively adjust the step size to enjoy the befit of multi-step learning as policy iteration method}. 

%\subsection
%\textbf{Markov Decision Processes.}
A \emph{Markov Decision Processes (MDP)} is described by the tuple $(\mathcal{S},\mathcal{A},{\cal T},r,\gamma)$. 
$\mathcal{S}$ is the state space; $\mathcal{A}$ the action space; $\gamma \in (0,1)$ is the discount factor; 
${\cal T}$ is the transition function mapping $s,a \in {\rls} \times {\rla}$ to distributions over $\rls$, 
denoted as ${\cal T}(\cdot|s, a)$.
%which we will use ${\cal T}(\cdot|s, a)$ for stochastic case and ${\cal T}(s,a)$ for deterministic case;
$r $ is the reward function mapping $s,a$ to distribution over $\R$, 
denoted as $r(\cdot|s, a)$.
%which we will use $r(\cdot|s, a)$ for stochastic case and $r(s,a)$ for deterministic case. 
A \emph{deterministic/stochastic MDP} is defined as a MDP with deterministic/stochastic transition function $\mathcal{T}$ and reward function ${r}$.
A \emph{discrete/continuous MDP} is defined as a MDP with discrete/continuous state space $\mathcal{S}$ and action space $\mathcal{A}$.

%The accumulated discounted reward from timestep $t$ onwards is defined as $R_t^\gamma=\sum_{n=0}^\infty \gamma^n r_{t+n}$. 
%In this paper, we mainly consider the deterministic MDP.
%{The trajectory by executing policy $\pi$ for $N$ steps after executing action $a_t$ at state $s_t$ is defined as 
%\begin{equation}
% \end{equation}
%where $r_t = r(s_t,a_t), s_{t+1} = {\cal T}(s_t,a_t), a_{t+1} \sim \pi( \cdot | s_{t+1} )$.}\

Our goal is to find actions which can bring largest return, which is defined as the accumulated reward from timestep $t$, i.e., $R_t^\gamma = \sum_{n=0}^{\infty} \gamma^n r_{t+n} $.
The value function of a policy $\pi$ is defined as the expected return by executing policy $\pi$, $Q^\pi (s,a) \triangleq \E \left[ R_t^\gamma  | s_t=s,a_t=a, \pi \right] $. 
The Bellman Expectation Operator is defined as
\begin{equation}
\begin{aligned}
 (\oneStepOperatorMath[\pi] Q )(s,a) \triangleq  
	\EE_{ r_{t}, s_{t+1}, a_{t+1} \sim \pi } 
	\left[
	\left.
	 r_t +
	 \gamma Q( s_{t+1}, a_{t+1} )
	\right|
	\parbox{0.4in}{
	\noindent
	$s_t=s$\\$a_t=a$
	}
	\right]
\end{aligned}
\end{equation}
Value-based RL methods aims to approximate the \emph{optimal value function} $Q^*(s,a) = \max_{\pi} Q^\pi(s,a) $. 
%For deterministic MDP, we also have $Q^*(s,a) = \max_{a_{t+1}, a_{t+2},\ldots} \E \left[ R_t^\gamma  | s_t=s,a_t=a, \pi \right]  $.
Solving the optimal value function $Q^*$ is a non-trivial task. This task can be achieved by iteratively applying 
%iteratively applying an operator to the estimated value function.
\emph{Bellman Optimality Operator}, which is defined as
\begin{equation}\label{eq_oneStepOperator}
\begin{aligned}
 (\oneStepOperatorMath Q )(s,a) \triangleq  
	\EE_{ r_{t}, s_{t+1} } 
	\left[
	\left.
	 r_t +
	 \gamma \max_{a^\prime_{t+1}} Q( s_{t+1}, a^\prime_{t+1} )
	\right|
	\parbox{0.4in}{
	\noindent
	$s_t=s$\\$a_t=a$
	}
	\right]
\end{aligned}
\end{equation}
The optimal value function $Q^*$ satisfies the Bellman Optimality Equation, $Q^*=\oneStepOperatorMath Q^*$, with the equality holds component-wise.

\section{\greedyStepValueIteration/} \label{sec_equation_main}

Value iteration looks forward for one-step, and then chooses the largest estimated value among various actions to update the value function. 
%applies the to update the value function. The operator is defined as
\begin{equation}\label{eq_oneStepOperator}
\begin{aligned}
%& (\oneStepOperatorMath Q )(s,a) \triangleq  
Q_{k+1}(s,a) = 
%&\quad\quad	
\EE_{  s^\prime } 
	\left[
	r(s,a) + 
%	\left.
%	 r_t +
	 \gamma \max_{a^\prime_{}} Q_k( s^\prime, a^\prime )
%	\right|
%	\parbox{0.4in}{
%	\noindent
%	$s_t=s$\\$a_t=a$
%	}
	\right]
\end{aligned}
\end{equation}

We aim to accelerate the process of value iteration by rollout behavior policy for multiple steps.
Our new method, named \emph{\greedyStepValueIteration/}, is updated in the following way.
%We define our new operator, \emph{\greedyStepOperatorTextFull/ (\greedyStepOperatorText/)} 
\begin{equation}\label{eq_greedyStepOperator}
%\tiny
\begin{aligned}
Q_{k+1}\left( s_0,a_0 \right) = \mathbb{E} _{s_1}\left[ \max_{\pi \in \widehat{\Pi }} \max_{1\le n\le N} \mathbb{E} _{\tau ^{1:n}_{s_1}\sim \pi}\left[ \sum_{t=0}^{n-1}{\gamma ^t}r\left( s_t,a_t \right) +\gamma ^n\max_{a_{n}'} Q_k\left( s_n,a_{n}' \right) \right] \right] 
%\mathcal{G} _{\widehat{\Pi }}^{N}Q\left( s_0,a_0 \right) \triangleq \mathbb{E} _{s_1}\left[ \max_{\pi \in \widehat{\Pi }} \max_{1\le n\le N} \mathbb{E} _{\tau ^{1:n}_{s_1}\sim \pi}\left[ \sum_{t=0}^{n-1}{\gamma ^t}r\left( s_t,a_t \right) +\gamma ^n\max_{a_{n}'} Q\left( s_n,a_{n}' \right) \right] \right] 
\end{aligned}
\end{equation}
%where $\max_{1\leq n \leq N  } \nstepReturn[n][\traj][Q]$称作\emph{$N$步\greedyReturn/}，
%这种回报会从所有可能的$n$步回报中（$1\leq n \leq N$）``{贪心}''地选择最大的回报，$N$表示\emph{最大\step/}。
where 
%$\policySet: \Pi \rightarrow \R^+ $ is the \emph{\policySetText/}; 
$\policySet$ is the \emph{\policySetText/}; $\tau ^{1:n}_{s_1}=(a_1,s_2,a_2,\cdots,s_n)$ is the trajectory starting from $s_1$; and $\tau ^{1:n}_{s_1} \sim \pi$ means sample trajectory according policy $\pi$.
As the equation implies, it looks forward by rolling-out various behavior policies with multiple steps,
then greedily chooses the largest estimated value among various behavior policies, actions, and steps.
The intuition behind this operation is that, it can quickly look forward to acquire future high-credit information.
In this way, it can bridge a highway allowing unimpeded flow of high-credit information across horizons.
\Cref{alg_GreedyMultistepValueIteration} shows the \greedyStepValueIteration/ algorithm.

\ifISWORD
\else
\begin{algorithm}[t]
%\footnotesize
%\SetAlCapNameFnt{\scriptsize}
%\SetAlCapFnt{\scriptsize}
\caption{ \greedyStepValueIteration/ }% of Model-free Reinforcement Learning
\begin{algorithmic}\label{alg_GreedyMultistepValueIteration}
%\REQUIRE{  Initialized $Q^{(0)}$\\
%	\hspace{0.15in} Buffer $B=\{ \}$ \\
%	{\scriptsize
%	\hspace{0.15in} //{ ${\boldsymbol \tau}^{s_t,a_t} \triangleq \{ \tau^{s_t,a_t} \} $ are the set of \\ \hspace{0.16in} truncated trajectories starting from $s_t,a_t$ }
%	}
%}
\STATE \textbf{Input:} 
%The operator $\greedyStepOperatorMath$;
%(either \greedyStepOperatorTextMath/ or \greedyStepOperatorTextMath/ );\\ \hspace{1.cm} 
\policySetTextMath/; the maximal \step/ $N$.
% \crefpOut{eq_greedyStepOperator} \crefpOut{eq_greedyDoubleExpectedStepOperator}
\STATE \textbf{Initialize:} initial value function $Q\K[0] \in \R^{|\rls|\times |\rla|} $, $\epsilon$, $k=0$.
%\STATE 
\REPEAT
	\STATE $Q\K[k+1] \leftarrow \greedyStepOperatorMath Q\K[k]$ 
%	\footnotemark
	\STATE $k \leftarrow k+1$
\UNTIL{ $ \| Q\K[k]-Q\K[k-1] \|_\infty \leq \epsilon $ }
\STATE \textbf{Output:} $Q\K[k]$
\end{algorithmic}
\end{algorithm}
\fi

This new value iteration converges to the fixed point of the optimal value function $Q^*$, which leads to a novel multi-step Bellman Optimality Equation. 
Let us define the \greedyStepOperatorTextFull/ $\greedyStepOperatorMath$ (\greedyStepOperatorText/),
$$
\mathcal{G} _{\widehat{\Pi }}^{N}Q\left( s_0,a_0 \right) \triangleq \mathbb{E} _{s_1}\left[ \max_{\pi \in \widehat{\Pi }} \max_{1\le n\le N} \mathbb{E} _{\tau ^{1:n}_{s_1}\sim \pi}\left[ \sum_{t=0}^{n-1}{\gamma ^t}r\left( s_t,a_t \right) +\gamma ^n\max_{a_{n}'} Q\left( s_n,a_{n}' \right) \right] \right] 
$$
% is a fixed point of the new operator, i.e., $Q^* = \greedyStepOperatorMath Q^*.$
%and $\policySet(\pi)$ is the probability of selecting the policy $\pi$.
%The Q values are updated by the values .
%The operator takes expectation with respect to all the possible trajectories $\traj$ executed by the behavior policies $\pi$.
%The trajectories collected by those behavior policies are used to calculate the \greedyReturn/.
%The policies in \policySetText/ are used to collect trajectories.

%The optimal value function $Q^*$ is a fixed point of the new operator, i.e., $Q^* = \greedyStepOperatorMath Q^*.$
%Formally, we have the following theorem.
%We provide an expaned version of this equation.

\ifISWORD
\else
\begin{figure}[b]
%\begin{minipage}{1.\linewidth}
%\begin{wrapfigure}{r}{.25\linewidth}
\centering{
{
	\includegraphics[width=.5\linewidth]{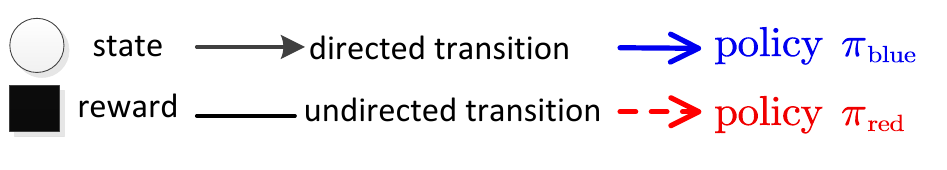} 	}
	\includegraphics[width=1.\linewidth]{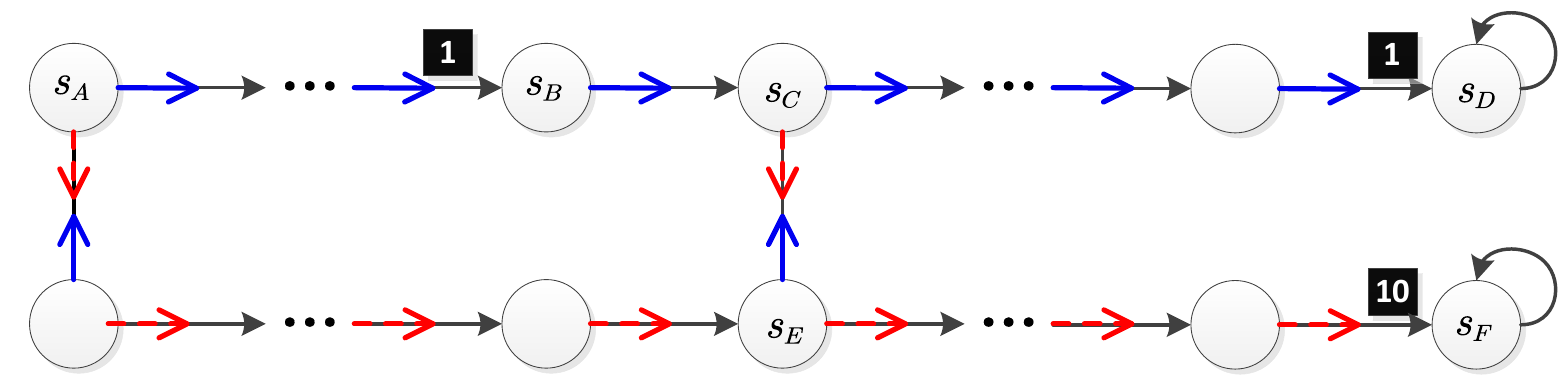} 
	}
	\captionof{figure}{ 
	A simple $N$-horzion MDP problem with two behavior polciies. 
	The horizon between $s_A$ and $s_F$ is $N$.
	The rewards are zero if not specified.
	}\label{fig_example}
\end{figure}
\fi

\begin{theorem}\label{theorem_greedyStepEquation}
(\greedyStepEquationFull/ (\greedyStepEquation/) )
%Under deterministic MDP, 
For any \policySetTextMath/, $N\geq 1$, we have $Q^* = \greedyStepOperatorMath Q^*$, i.e.,
%state-action 
\begin{equation}\label{eq_greedyStepEquation}
\begin{small}
Q^*\left( s_0,a_0 \right) = \mathbb{E} _{s_1}\left[ \max_{\pi \in \widehat{\Pi }} \max_{1\le n\le N} \mathbb{E} _{\tau ^{1:n}_{s_1}\sim \pi}\left[ \sum_{t=0}^{n-1}{\gamma ^t}r\left( s_t,a_t \right) +\gamma ^n\max_{a_{n}'} Q^*\left( s_n,a_{n}' \right) \right] \right] 
\end{small}
\end{equation}
\end{theorem}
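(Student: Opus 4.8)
The plan is to establish the two component-wise inequalities $\mathcal{G}^{N}_{\widehat{\Pi}}Q^{*}\le Q^{*}$ and $\mathcal{G}^{N}_{\widehat{\Pi}}Q^{*}\ge Q^{*}$ and then conclude equality. Throughout I would write $V^{*}(s):=\max_{a}Q^{*}(s,a)$ and freely use the classical Bellman Optimality Equation $Q^{*}=\mathcal{B}Q^{*}$, i.e. $Q^{*}(s,a)=\mathbb{E}_{s'}[\,r(s,a)+\gamma V^{*}(s')\,]$, which is the only external fact needed.

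The direction $\mathcal{G}^{N}_{\widehat{\Pi}}Q^{*}\ge Q^{*}$ is immediate. Inside the operator, the choice $n=1$ makes the inner expectation degenerate: the trajectory is empty, the sum reduces to $r(s_{0},a_{0})$, and the bootstrap term is $\gamma\max_{a_{1}'}Q^{*}(s_{1},a_{1}')=\gamma V^{*}(s_{1})$, independently of $\pi$. Hence for every $s_{1}$ the double maximum $\max_{\pi\in\widehat{\Pi}}\max_{1\le n\le N}(\cdots)$ is at least $r(s_{0},a_{0})+\gamma V^{*}(s_{1})$, and taking $\mathbb{E}_{s_{1}}$ gives $(\mathcal{G}^{N}_{\widehat{\Pi}}Q^{*})(s_{0},a_{0})\ge\mathbb{E}_{s_{1}}[r(s_{0},a_{0})+\gamma V^{*}(s_{1})]=(\mathcal{B}Q^{*})(s_{0},a_{0})=Q^{*}(s_{0},a_{0})$.

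For the direction $\mathcal{G}^{N}_{\widehat{\Pi}}Q^{*}\le Q^{*}$ I would isolate the key lemma: for every $\pi\in\widehat{\Pi}$ and every $m\ge 0$,
\[
 g^{\pi}_{m}(s)\;:=\;\mathbb{E}_{\pi}\!\left[\sum_{j=0}^{m-1}\gamma^{j}r(s_{j},a_{j})+\gamma^{m}V^{*}(s_{m})\;\Big|\;s_{0}=s\right]\;\le\;V^{*}(s),
\]
where the expectation is over the trajectory obtained by rolling out $\pi$ for $m$ steps from $s$. Intuitively, following any behavior policy for finitely many steps and then bootstrapping with the optimal value can never beat $V^{*}$. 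I would prove this by induction on $m$: the base case $m=0$ is an equality; for the step, condition on the first action and next state, apply the inductive hypothesis $g^{\pi}_{m}(s')\le V^{*}(s')$, then use $\mathbb{E}_{s'}[r(s,a)+\gamma V^{*}(s')]=Q^{*}(s,a)\le V^{*}(s)$ and average over $a\sim\pi(\cdot\mid s)$. Given the lemma, fix $(s_{0},a_{0})$, pull the constant $r(s_{0},a_{0})$ out of the inner expectation, reindex, and recognize that for a fixed pair $(\pi,n)$ the inner term equals $r(s_{0},a_{0})+\gamma\,g^{\pi}_{n-1}(s_{1})\le r(s_{0},a_{0})+\gamma V^{*}(s_{1})$. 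Since this bound is uniform over $\pi$ and $n$, it survives the double maximum even though that maximum sits inside $\mathbb{E}_{s_{1}}$; taking $\mathbb{E}_{s_{1}}$ gives $(\mathcal{G}^{N}_{\widehat{\Pi}}Q^{*})(s_{0},a_{0})\le\mathbb{E}_{s_{1}}[r(s_{0},a_{0})+\gamma V^{*}(s_{1})]=Q^{*}(s_{0},a_{0})$.

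Combining the two inequalities yields $Q^{*}=\mathcal{G}^{N}_{\widehat{\Pi}}Q^{*}$. The one delicate point is the placement of $\max_{\pi}\max_{n}$ inside the expectation over $s_{1}$ — effectively allowing a ``policy'' that commits to a behavior policy and a rollout length only after seeing $s_{1}$; this is precisely what the uniform bound from the lemma dissolves, since the estimate $\le r(s_{0},a_{0})+\gamma V^{*}(s_{1})$ holds for every realization of $s_{1}$ before the outer expectation is taken. The induction on the rollout length $m$ is where the Bellman optimality equation is actually used, and is the real (though short) content of the argument. Everything goes through verbatim for stochastic MDPs, reading $r(s,a)$ as $\mathbb{E}[\,r\mid s,a\,]$ when rewards are random.
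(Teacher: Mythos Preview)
Your proof is correct and follows essentially the same route as the paper: both directions hinge on the fact that $(\mathcal{B}^{\pi})^{n-1}Q^{*}\le Q^{*}$ for every $\pi$ with equality at $n=1$, which the paper invokes directly via the operator rewriting $\mathcal{G}^{N}_{\widehat{\Pi}}Q=\max_{\pi}\max_{n}(\mathcal{B}^{\pi})^{n-1}\mathcal{B}Q$ and which you obtain through your induction lemma $g^{\pi}_{m}\le V^{*}$. Your version is a bit more careful about the placement of $\max_{\pi}\max_{n}$ inside $\mathbb{E}_{s_{1}}$, a point the paper's operator-level argument glosses over, but the substance is identical.
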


\begin{proof}
%Let $\mathcal{B} ^{\pi}$ and $\mathcal{B}$ denote the Bellman Expectation and Optimality Operator respectively. 
Our operator can be rewritten as
$$
\mathcal{G} _{N}^{\widehat{\Pi }}Q=\max_{\pi \in \widehat{\Pi }} \max_{1\le n\le N} \left( \mathcal{B} ^{\pi} \right) ^{n-1}\mathcal{B} Q.
$$

As
$\left( \mathcal{B} ^{\pi} \right) ^{n-1}\mathcal{B} Q^*=\left( \mathcal{B} ^{\pi} \right) ^{n-1}Q^*\le Q^*$ for any $\pi$ (the equivalence holds when $n=1$), we get
$$
\max_{\pi \in \widehat{\Pi }} \max_{1\le n\le N} \left( \mathcal{B} ^{\pi} \right) ^{n-1}\mathcal{B} Q^*=Q^*.
$$

\end{proof}

%All the proofs are provided in APPENDIX \ref{app-sec_proof}.
This equation provides a novel equation for obtaining the optimal value function $Q^*$.

%The benefit of our value iteration method is that, if a policy behaves better over a long period, then 
Our method converges faster when the behavior policy performs well over a period. 
We first give an intuitive example here and give a formal analysis in \Cref{sec_theorem}.
Take the $N$-horizon problem in \Cref{fig_example} as an example,
our method only requires $2$ iterations to obtain the optimal value function, while classical value iteration requires $N$ iterations.
%As we will show in \Cref{theorem_converge_faster_onestep}, our operator converges faster than \oneStepOperatorText/.
%, the blue policy $\pi_1$ and 
%Our method only requires $2$ iterations to obtain the optimal value function, while \oneStepOperatorText/ requires $N$ iterations.

% ($L+2$ is length between start state and the highest-reward state).

%Besides, the new operator naturally derives a new Value Iteration algorithm, presented in \Cref{alg_GreedyMultistepValueIteration}.
%As we will show in \Cref{theorem_converge_faster_onestep}, our operator converges faster than \oneStepOperatorText/.
%Take the problem in \Cref{fig_example} as an example, our operator only requires $3$ iterations to obtain the optimal value function, while \oneStepOperatorText/ requires $L+2$ iterations ($L+2$ is length between start state and the highest-reward state).
%%We have provides the detail of the computation process in , 
%One can refer to APPENDIX \ref{app-sec_example} for the detail of the computation process.
% of this example.
%The new operator naturally derive a new Value Iteration algorithm.
%named \proposedMethodPartFull/ VI (\Cref{alg_GreedyMultistepValueIteration}), where the assignments hold component-wise.

%However, the \greedyStepOperatorTextMath/ is biased when applying stochastic MDP. We will provide detail in \Cref{}.

\subsection{Theoretical Analysis}\label{sec_theorem}

We make analysis with the following results.
%and mainly give the following results.
%from the following perspectives: 
1) Solving the equation leads to the optimal value function;
2) The iteration process with the operators converges to the optimal value function;
3) The operators converge faster than the traditional operator.
%\oneStepOperatorText/.
We consider finite state and action space.
%We will first present the results of \greedyStepEquation//Operator
% (\crefnop{eq_greedyDoubleExpectedStepOperator}).
%At the end of this section, we will show that these results also hold for \greedyStepEquation//Operator (\crefnop{eq_greedyStepOperator}) under deterministic MDP.
\emph{Note that, all these results hold for any \policySetTextMath/, $N\geq 1$.}

%\emph{Note that, the results of \greedyStepOperatorTextMath/ below hold for any \policySetTextMath/, $N\geq 1$, and for both deterministic and stochastic MDP.}

\textbf{{First}}, we show that solving the \greedyStepEquation/ leads to the optimal value function.
% $Q^*$.
%we show that $Q^*$ is the solution of the \greedyStepEquation/.
%\CrefpOut{theorem_greedyDouleExpectedStepEquation}.
%the optimal value function
% \crefpOut{eq_greedyDoubleExpectedStepEquation}
\begin{lemma}\label{theorem_contraction}
(Contraction)
%For any value function
%给定任意价值函数
For any value function $q, q' \in  \R ^{ |\rls||\rla| } $, we have
$
	\|  \greedyStepOperatorMath q - \greedyStepOperatorMath q' \|
	\leq \| \oneStepOperatorMath q - \oneStepOperatorMath q' \|
	 \leq \gamma 	\|   q -  q' \|
$.
\end{lemma}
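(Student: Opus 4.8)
The plan is to reduce the lemma to two standard ingredients together with the operator rewriting already used in the proof of \Cref{theorem_greedyStepEquation}, namely $\greedyStepOperatorMath Q = \max_{\pi\in\widehat{\Pi}}\max_{1\le n\le N}(\mathcal{B}^{\pi})^{n-1}\mathcal{B} Q$, where $\mathcal{B}^{\pi}$ is the Bellman expectation operator and $\mathcal{B}=\oneStepOperatorMath$ the Bellman optimality operator. The two ingredients are: (i) each $\mathcal{B}^{\pi}$ is a $\gamma$-contraction in the sup-norm, hence in particular non-expansive, so any composition $(\mathcal{B}^{\pi})^{n-1}$ is non-expansive (for $n=1$ it is the identity); and (ii) the elementary ``max of differences'' inequality: for any family of reals $\{a_i\}_{i\in I},\{b_i\}_{i\in I}$, $\bigl|\sup_i a_i-\sup_i b_i\bigr|\le\sup_i|a_i-b_i|$.

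First I would prove the left inequality $\|\greedyStepOperatorMath q-\greedyStepOperatorMath q'\|\le\|\oneStepOperatorMath q-\oneStepOperatorMath q'\|$. Fix $q,q'$ and, for each $\pi\in\widehat{\Pi}$ and $1\le n\le N$, set $f_{\pi,n}:=(\mathcal{B}^{\pi})^{n-1}\mathcal{B} q$ and $g_{\pi,n}:=(\mathcal{B}^{\pi})^{n-1}\mathcal{B} q'$. By ingredient (i),
$$\|f_{\pi,n}-g_{\pi,n}\| = \bigl\|(\mathcal{B}^{\pi})^{n-1}(\mathcal{B} q)-(\mathcal{B}^{\pi})^{n-1}(\mathcal{B} q')\bigr\| \le \|\mathcal{B} q-\mathcal{B} q'\| \qquad\text{for all }\pi,n.$$
Now apply ingredient (ii) pointwise at each state-action pair $(s,a)$, with the index ranging over $\widehat{\Pi}\times\{1,\dots,N\}$: since $\greedyStepOperatorMath q(s,a)=\max_{\pi,n}f_{\pi,n}(s,a)$ and likewise for $q'$,
$$\bigl|\greedyStepOperatorMath q(s,a)-\greedyStepOperatorMath q'(s,a)\bigr| \le \max_{\pi,n}\bigl|f_{\pi,n}(s,a)-g_{\pi,n}(s,a)\bigr| \le \max_{\pi,n}\|f_{\pi,n}-g_{\pi,n}\| \le \|\mathcal{B} q-\mathcal{B} q'\|,$$
and taking the supremum over $(s,a)$ gives the claim.

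For the right inequality $\|\oneStepOperatorMath q-\oneStepOperatorMath q'\|\le\gamma\|q-q'\|$ I would use the classical Bellman-optimality argument: from the definition of $\oneStepOperatorMath$ one has $(\oneStepOperatorMath q-\oneStepOperatorMath q')(s,a)=\gamma\,\E_{s'}\!\bigl[\max_{a'}q(s',a')-\max_{a'}q'(s',a')\bigr]$; bounding the integrand by $\bigl|\max_{a'}q(s',a')-\max_{a'}q'(s',a')\bigr|\le\max_{a'}|q(s',a')-q'(s',a')|\le\|q-q'\|$ (ingredient (ii) again, now over actions) and taking the sup over $(s,a)$ yields the bound. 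Chaining the two inequalities completes the proof.

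The step I expect to be most delicate is the reduction itself: the first inequality genuinely relies on treating $\greedyStepOperatorMath$ with the maximizations over $\pi$ and $n$ taken \emph{outside} the whole rollout expectation --- exactly the form used in the proof of \Cref{theorem_greedyStepEquation} --- and on applying the \emph{supremum} form of the max-difference inequality over the full index set $\widehat{\Pi}\times\{1,\dots,N\}$ simultaneously, rather than pointwise in a single argument. Once that reduction is fixed, the remaining pieces (non-expansiveness of $(\mathcal{B}^{\pi})^{n-1}$ and the $\gamma$-contraction of $\oneStepOperatorMath$) are routine.
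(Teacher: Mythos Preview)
Your proposal is correct and follows essentially the same route as the paper's proof: both use the rewriting $\greedyStepOperatorMath Q=\max_{\pi,n}(\mathcal{B}^{\pi})^{n-1}\mathcal{B} Q$, apply the max--difference inequality to move the supremum inside the norm, invoke the contraction of $(\mathcal{B}^{\pi})^{n-1}$ (the paper records the sharper factor $\gamma^{n-1}$ but then only uses $\gamma^{n-1}\le 1$, i.e., exactly your non-expansiveness), and finish with the standard $\gamma$-contraction of $\oneStepOperatorMath$. Your flagged ``delicate step'' --- taking the max over $\pi,n$ outside the rollout expectation --- is precisely the identification the paper also relies on from the proof of \Cref{theorem_greedyStepEquation}.
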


\begin{proof}
\begin{align*}&\quad
\left\| \mathcal{G} _{\widehat{\Pi }}^{N}q-\mathcal{G} _{\widehat{\Pi }}^{N}q' \right\| 
\\ &
=\left\| \max_{\pi \in \widehat{\Pi }} \max_{1\le n\le N} \left( \mathcal{B} ^{\pi} \right) ^{n-1}\mathcal{B} q-\max_{\pi \in \widehat{\Pi }} \max_{1\le n\le N} \left( \mathcal{B} ^{\pi} \right) ^{n-1}\mathcal{B} q' \right\| 
\\ &
\le \max_{\pi \in \widehat{\Pi }} \max_{1\le n\le N} \left\| \left( \mathcal{B} ^{\pi} \right) ^{n-1}\mathcal{B} q-\left( \mathcal{B} ^{\pi} \right) ^{n-1}\mathcal{B} q' \right\| 
\\ &
\le \max_{\pi \in \widehat{\Pi }} \max_{1\le n\le N} \gamma ^{n-1}\left\| \mathcal{B} q-\mathcal{B} q' \right\| 
\\ &
\le \left\| \mathcal{B} q-\mathcal{B} q' \right\| 
\\ &
\le \gamma \left\| q-q' \right\| 
\end{align*}
\end{proof}

\begin{theorem}\label{theorem_solution}
(\greedyStepEquationFull/)
For any value function $Q \in \R ^{ |\rls||\rla| } $, we have that $Q=Q^*$ if and only if $Q$ satisfies \greedyStepEquation/, i.e., $Q=\greedyStepOperatorMath Q$.
\end{theorem}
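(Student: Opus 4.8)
The plan is to combine the two facts already established earlier: Theorem \ref{theorem_greedyStepEquation} shows $Q^*$ satisfies the \greedyStepEquation/ (the "if" direction for $Q = Q^*$), and Lemma \ref{theorem_contraction} shows $\greedyStepOperatorMath$ is a $\gamma$-contraction in the sup-norm. The "only if" direction is immediate from Theorem \ref{theorem_greedyStepEquation}. For the converse, I would invoke the Banach fixed-point theorem: since $\greedyStepOperatorMath$ is a contraction on the complete metric space $(\R^{|\rls||\rla|}, \|\cdot\|_\infty)$, it has a \emph{unique} fixed point. Theorem \ref{theorem_greedyStepEquation} already exhibits $Q^*$ as a fixed point, so any $Q$ with $Q = \greedyStepOperatorMath Q$ must equal $Q^*$.

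Concretely, the steps in order: (1) State that the space of bounded value functions with the sup-norm is complete, and that by Lemma \ref{theorem_contraction} the operator $\greedyStepOperatorMath$ satisfies $\|\greedyStepOperatorMath q - \greedyStepOperatorMath q'\| \le \gamma \|q - q'\|$ with $\gamma \in (0,1)$, hence is a contraction. (2) Apply Banach's fixed-point theorem to conclude $\greedyStepOperatorMath$ has exactly one fixed point. (3) Recall from Theorem \ref{theorem_greedyStepEquation} that $Q^* = \greedyStepOperatorMath Q^*$, so $Q^*$ is that fixed point. (4) Conclude: if $Q$ satisfies $Q = \greedyStepOperatorMath Q$ then $Q = Q^*$ by uniqueness; conversely if $Q = Q^*$ then $Q$ satisfies the equation by Theorem \ref{theorem_greedyStepEquation}. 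This closes both directions of the "if and only if."

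I do not anticipate a genuine obstacle here — the theorem is essentially a corollary of the preceding lemma and theorem. The only point requiring a sentence of care is ensuring the operator maps the relevant space of (bounded) value functions to itself so that Banach's theorem applies cleanly; on finite state-action spaces this is automatic since every function in $\R^{|\rls||\rla|}$ is bounded and $\greedyStepOperatorMath q$ is again a finite real vector (the rewards are assumed bounded and $\gamma < 1$, so the nested expectations and maxima are finite). If one wanted to avoid citing Banach explicitly, an equally short alternative is: suppose $Q = \greedyStepOperatorMath Q$; then $\|Q - Q^*\| = \|\greedyStepOperatorMath Q - \greedyStepOperatorMath Q^*\| \le \gamma \|Q - Q^*\|$, which forces $\|Q - Q^*\| = 0$ since $\gamma < 1$. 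I would likely present this self-contained version as the main argument and mention Banach as the conceptual reason.
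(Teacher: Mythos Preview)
Your proposal is correct and matches the paper's intended argument: the theorem is placed immediately after Lemma~\ref{theorem_contraction} (contraction) and Theorem~\ref{theorem_greedyStepEquation} ($Q^*$ is a fixed point) precisely so that it follows as the standard uniqueness-of-fixed-point consequence, and your write-up---either via Banach or the one-line $\|Q-Q^*\|\le\gamma\|Q-Q^*\|$ squeeze---is exactly that.
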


%with the \greedyStepOperatorTextMath/

\textbf{{Then}}, we show that the value iteration process with the \greedyStepOperatorTextMath/
% \crefpOut{eq_greedyDoubleExpectedStepOperator} 
converges to the optimal value function $Q^*$.
%然后，我们说明通过基于\greedyStepOperatorTextMath/的价值迭代\crefpOut{}可以得到最优价值函数$Q^*$。
\begin{theorem}\label{theorem_converge_to_optimal}
%对于确定性MDP ,
$\lim_{k \rightarrow \infty } \|(\greedyStepOperatorMath)^k Q_0 - Q^* \| = 0$.
%where $ Q\K^{\greedyStepOperatorMath} \triangleq (\greedyStepOperatorMath)^k Q_0 $.
%$\lim_{k \rightarrow \infty } \| Q\K^{\greedyStepOperatorMath} - Q^* \| = 0$,
%where $ Q\K^{\greedyStepOperatorMath} \triangleq (\greedyStepOperatorMath)^k Q_0 $.
% TODO:这里直接改写成 那个k次放不就好了
% TODO: 我觉得这里能给出和Q^*之间的距离更好点
%where $Q\K^{\greedyStepOperatorMath}$ is the $k$-th $Q$ value function by .
\end{theorem}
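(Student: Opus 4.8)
The plan is to treat this as an immediate consequence of the Banach fixed-point theorem, since all the real work is already packaged into \Cref{theorem_contraction} and \Cref{theorem_greedyStepEquation}. First I would record the two ingredients: (i) by \Cref{theorem_greedyStepEquation}, $Q^*$ is a fixed point of the operator, so $(\greedyStepOperatorMath)^k Q^* = Q^*$ for every $k \ge 0$; and (ii) by \Cref{theorem_contraction}, $\greedyStepOperatorMath$ is a $\gamma$-contraction in the sup-norm, i.e.\ $\|\greedyStepOperatorMath q - \greedyStepOperatorMath q'\| \le \gamma \|q - q'\|$ for all value functions $q,q'$.

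Next I would iterate the contraction estimate: a trivial induction on $k$ gives $\|(\greedyStepOperatorMath)^k q - (\greedyStepOperatorMath)^k q'\| \le \gamma^k \|q - q'\|$. Instantiating $q = Q_0$ and $q' = Q^*$ and using ingredient (i),
\[
\|(\greedyStepOperatorMath)^k Q_0 - Q^*\| = \|(\greedyStepOperatorMath)^k Q_0 - (\greedyStepOperatorMath)^k Q^*\| \le \gamma^k \|Q_0 - Q^*\|.
\]
Because the state and action spaces are finite, $\|Q_0 - Q^*\|$ is finite, and since $\gamma \in (0,1)$ the right-hand side tends to $0$ as $k \to \infty$, which is precisely the claim; as a bonus this also exhibits the quantitative rate $\mathcal{O}(\gamma^k)$.

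Honestly there is no substantive obstacle here: the theorem is a corollary of the contraction lemma together with the fixed-point characterization, both established above. The only points requiring minor care are that $Q^*$ must genuinely be a fixed point — this is why \Cref{theorem_greedyStepEquation} is invoked, so that $(\greedyStepOperatorMath)^k Q^*$ collapses to $Q^*$ rather than merely staying close to it — and that $\|Q_0 - Q^*\| < \infty$, so that the geometric bound is meaningful, which the finite-space assumption guarantees.
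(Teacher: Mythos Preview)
Your proposal is correct and is exactly the approach the paper intends: the theorem is a direct corollary of \Cref{theorem_contraction} (the $\gamma$-contraction) together with \Cref{theorem_greedyStepEquation} (that $Q^*$ is a fixed point), combined via the standard Banach fixed-point iteration bound $\|(\greedyStepOperatorMath)^k Q_0 - Q^*\| \le \gamma^k \|Q_0 - Q^*\|$.
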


\textbf{{Next}}, we compare the convergence speed of our operator with that of the classical \oneStepOperatorTextMath/.
%, which can also leads to the optimal value function.
\begin{theorem}\label{theorem_converge_faster_onestep}
(Faster contraction)
%$ \greedyStepOperatorMath q \geq \oneStepOperatorMath q $. 
$ \| \greedyStepOperatorMath q - Q^* \|  \leq \| \oneStepOperatorMath q - Q^* \| $.
%If $q(s,a) \geq Q^*(s,a)$ for any $(s,a)$, then $ \| \greedyStepOperatorMath q - Q^* \|  = \| \oneStepOperatorMath q - Q^* \| $.
\end{theorem}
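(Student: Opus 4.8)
The plan is to exploit the representation $\greedyStepOperatorMath q = \max_{\pi \in \widehat{\Pi}} \max_{1 \le n \le N} (\mathcal{B}^\pi)^{n-1} \mathcal{B} q$ that was already established in the proof of \Cref{theorem_greedyStepEquation}, together with the fixed-point identity $Q^* = \mathcal{B} Q^*$ and the monotonicity of the Bellman operators. First I would note that for the one-step term ($n=1$) the inner expression is simply $\mathcal{B} q$, so $\greedyStepOperatorMath q \ge \mathcal{B} q$ pointwise; combined with monotonicity of $\mathcal{B}$ this already shows $\greedyStepOperatorMath q$ is ``at least as good'' as $\mathcal{B} q$ in the direction of $Q^*$ when $q \le Q^*$, but I need a two-sided bound so I would instead argue directly about the distance to $Q^*$.

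The key step is to compare $\greedyStepOperatorMath q$ and $\oneStepOperatorMath q = \mathcal{B} q$ to $Q^*$ componentwise. Fix a state-action pair $(s_0,a_0)$. On one side, since the $n=1$ branch is present in the maximization, $(\greedyStepOperatorMath q)(s_0,a_0) \ge (\mathcal{B} q)(s_0,a_0)$. On the other side, I claim $(\greedyStepOperatorMath q)(s_0,a_0) \le Q^*(s_0,a_0) + \| \mathcal{B} q - Q^* \|$: indeed, for each $\pi$ and each $n$, using $Q^* = \mathcal{B} Q^*$ and the fact that $(\mathcal{B}^\pi)^{n-1}$ and the outer $\mathbb{E}_{s_1}$ are non-expansive monotone maps, $(\mathcal{B}^\pi)^{n-1}\mathcal{B} q \le (\mathcal{B}^\pi)^{n-1}(\mathcal{B} Q^* + \| \mathcal{B} q - \mathcal{B} Q^*\| \mathbf{1}) \le (\mathcal{B}^\pi)^{n-1} Q^* + \| \mathcal{B} q - Q^*\| \mathbf{1} \le Q^* + \|\mathcal{B} q - Q^*\|\mathbf{1}$, where the last inequality is the same $(\mathcal{B}^\pi)^{n-1} Q^* \le Q^*$ fact used in \Cref{theorem_greedyStepEquation}. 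Taking the max over $\pi$ and $n$ preserves this, and the outer expectation over $s_1$ preserves it too. So $\greedyStepOperatorMath q$ is sandwiched: $\mathcal{B} q \le \greedyStepOperatorMath q \le Q^* + \|\mathcal{B} q - Q^*\| \mathbf{1}$.

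From this sandwich the result follows by a short case analysis at each $(s_0,a_0)$. If $(\greedyStepOperatorMath q)(s_0,a_0) \ge Q^*(s_0,a_0)$, then $0 \le (\greedyStepOperatorMath q - Q^*)(s_0,a_0) \le \|\mathcal{B} q - Q^*\|_\infty = \|\oneStepOperatorMath q - Q^*\|_\infty$. If instead $(\greedyStepOperatorMath q)(s_0,a_0) < Q^*(s_0,a_0)$, then since $\greedyStepOperatorMath q \ge \mathcal{B} q$ we have $Q^*(s_0,a_0) - (\greedyStepOperatorMath q)(s_0,a_0) \le Q^*(s_0,a_0) - (\mathcal{B} q)(s_0,a_0) \le \|\mathcal{B} q - Q^*\|_\infty$. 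Either way $|(\greedyStepOperatorMath q - Q^*)(s_0,a_0)| \le \|\oneStepOperatorMath q - Q^*\|_\infty$, and taking the sup over $(s_0,a_0)$ gives $\|\greedyStepOperatorMath q - Q^*\| \le \|\oneStepOperatorMath q - Q^*\|$.

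I expect the main obstacle to be stating the upper bound $\greedyStepOperatorMath q \le Q^* + \|\mathcal{B} q - Q^*\|\mathbf{1}$ cleanly: one must be careful that the bound $(\mathcal{B}^\pi)^{n-1}\mathcal{B} q \le Q^* + \|\mathcal{B} q - Q^*\|\mathbf{1}$ holds uniformly over $\pi \in \widehat{\Pi}$ and $1 \le n \le N$ so that it survives the two nested maxima and the outer expectation, which relies only on monotonicity and the sup-norm non-expansiveness of $\mathcal{B}^\pi$ (contraction with factor $\gamma^{n-1} \le 1$) — exactly the ingredients already used in \Cref{theorem_contraction}. Everything else is routine; no new machinery beyond what the earlier lemmas supply is needed.
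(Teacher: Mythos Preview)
Your proof is correct, but you are working harder than needed. The paper obtains this theorem as an immediate corollary of \Cref{theorem_contraction}: since $Q^*$ is a fixed point of both $\greedyStepOperatorMath$ (by \Cref{theorem_greedyStepEquation}) and $\oneStepOperatorMath$, one simply substitutes $q' = Q^*$ into the contraction inequality $\|\greedyStepOperatorMath q - \greedyStepOperatorMath q'\| \le \|\oneStepOperatorMath q - \oneStepOperatorMath q'\|$ to get $\|\greedyStepOperatorMath q - Q^*\| \le \|\oneStepOperatorMath q - Q^*\|$ in one line.

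Your route instead establishes the componentwise sandwich $\mathcal{B} q \le \greedyStepOperatorMath q \le Q^* + \|\mathcal{B} q - Q^*\|\mathbf{1}$ and then does a case split. This is valid and in fact re-derives, in the special case $q' = Q^*$, the same $|\max_i a_i - \max_i b_i| \le \max_i |a_i - b_i|$ mechanism that underlies the first inequality in the proof of \Cref{theorem_contraction}. What your argument buys is a slightly finer picture: the lower bound $\greedyStepOperatorMath q \ge \mathcal{B} q$ is pointwise and exact, while the upper overshoot past $Q^*$ is controlled by $\|\mathcal{B} q - Q^*\|$ via the damping $(\mathcal{B}^\pi)^{n-1}Q^* \le Q^*$. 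The paper's approach buys brevity --- once \Cref{theorem_contraction} is in hand, there is nothing left to prove.
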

This theorem implies that our operator always converges faster than the \oneStepOperatorText/.

\begin{theorem}\label{theorem_converge_faster_onestep}
(Exponential contraction rate)
% TODO:  试试看能不能把条件拿掉
Let $\mathcal{S}_{s,n}^{\pi} $ denote the set of states that the policy $\pi$ can visit within $n$ steps, starting from state $s$.
Assume that $q\leq Q^*$.
If for any $s \in \mathcal{S}$ there exists one policy $\pi \in \widehat \Pi $ such that $\pi(s') = \pi^*(s')$ for any $s'\in \mathcal{S}_{s,n}^\pi$, 
then the contraction rate of $\greedyStepOperatorMath$ is $\mathcal{O}(\gamma^{n+1})$,  i.e., $\| \greedyStepOperatorMath q - Q^* \| \leq \gamma^{n+1}\| q - Q^* \| $.\\
Specially, if for any $s \in \mathcal{S}$ there exists one policy  $\pi \in \widehat \Pi $ such that $\pi(s) = \pi^*(s)$, 
then the contraction rate of $\greedyStepOperatorMath$ is $\mathcal{O}(\gamma^2)$.\\
\end{theorem}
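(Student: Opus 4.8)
The plan is to combine monotonicity of the Bellman operators with the two $\max$'s built into $\greedyStepOperatorMath$. Since $q \le Q^*$, the telescoping identity already used in the proofs of \Cref{theorem_greedyStepEquation} and \Cref{theorem_contraction} gives $(\mathcal{B}^\pi)^{m-1}\mathcal{B} q \le (\mathcal{B}^\pi)^{m-1}Q^* \le Q^*$ for every $\pi\in\widehat\Pi$ and $1\le m\le N$; carrying this through the outer $\EE_{s_1}$ in the definition of $\greedyStepOperatorMath$ (for any fixed $s_1$ the bootstrapped $m$-step return from $s_1$ is at most $r(s_0,a_0)+\gamma V^*(s_1)$ uniformly in $\pi,m$) yields $\greedyStepOperatorMath q \le Q^*$. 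Hence $\|\greedyStepOperatorMath q - Q^*\| = \max_{s_0,a_0}\bigl(Q^*(s_0,a_0) - \greedyStepOperatorMath q(s_0,a_0)\bigr)$, so it suffices to lower-bound $\greedyStepOperatorMath q(s_0,a_0)$ for a fixed, arbitrary $(s_0,a_0)$.

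Next I would instantiate the operator with a favorable choice of behavior policy and rollout length. For each realization $s_1\sim\mathcal{T}(\cdot\mid s_0,a_0)$, the hypothesis supplies a policy $\pi_{s_1}\in\widehat\Pi$ that coincides with $\pi^*$ on $\mathcal{S}^{\pi_{s_1}}_{s_1,n}$; take rollout length $m=n+1$ (if $N<n+1$, take $m=N$ and obtain the weaker $\gamma^{N}$). Keeping only this term inside the two maxima,
\[
\greedyStepOperatorMath q(s_0,a_0) \;\ge\; \EE_{s_1}\!\left[\,\EE_{\tau^{1:n+1}_{s_1}\sim \pi_{s_1}}\!\Bigl[\,\textstyle\sum_{t=0}^{n}\gamma^t r(s_t,a_t) \;+\; \gamma^{n+1}\max_{a'}q(s_{n+1},a')\,\Bigr]\right].
\]
The crucial point is that this $(n{+}1)$-step rollout selects actions only at $s_1,\dots,s_n$, all of which lie in $\mathcal{S}^{\pi_{s_1}}_{s_1,n}$; because $\pi_{s_1}$ agrees with $\pi^*$ there and this agreement is self-consistent (so the rollout never escapes that set), the law of $(s_1,a_1,\dots,s_n,a_n,s_{n+1})$ under $\pi_{s_1}$ equals its law under $\pi^*$, and the right-hand side is unchanged when $\pi_{s_1}$ is replaced by $\pi^*$.

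Finally, unrolling the Bellman optimality equation $Q^*=\oneStepOperatorMath Q^*$ along $\pi^*$ for $n+1$ steps gives $Q^*(s_0,a_0)=\EE_{\pi^*}\bigl[\sum_{t=0}^{n}\gamma^t r(s_t,a_t)+\gamma^{n+1}V^*(s_{n+1})\bigr]$ over the same rollout. Subtracting, the reward terms cancel and
\[
Q^*(s_0,a_0) - \greedyStepOperatorMath q(s_0,a_0) \;\le\; \gamma^{n+1}\,\EE_{\pi^*}\!\bigl[\,V^*(s_{n+1}) - \textstyle\max_{a'}q(s_{n+1},a')\,\bigr] \;\le\; \gamma^{n+1}\|Q^*-q\|,
\]
using $V^*(s)=\max_{a'}Q^*(s,a')$ and $\max_{a'}Q^*(s,a')-\max_{a'}q(s,a')\le\max_{a'}\bigl(Q^*(s,a')-q(s,a')\bigr)\le\|Q^*-q\|$. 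Maximizing over $(s_0,a_0)$ proves the claim. The special case is exactly this argument with $n=1$ and $m=2$: the only decision point of a $2$-step rollout is $s_1$, so $\pi_{s_1}(s_1)=\pi^*(s_1)$ alone suffices, giving the $\mathcal{O}(\gamma^2)$ rate.

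I expect the real work to be the middle step: rigorously showing the $\pi_{s_1}$-rollout and the $\pi^*$-rollout have identical laws, and --- more delicately --- getting the index bookkeeping right so that $n$ steps of policy agreement produce a $\gamma^{n+1}$ factor rather than $\gamma^{n}$ or $\gamma^{n+2}$. This hinges on the precise meaning of ``within $n$ steps'' in $\mathcal{S}^\pi_{s,n}$ and on the self-consistency of the hypothesis, namely that $\pi=\pi^*$ on $\mathcal{S}^\pi_{s,n}$ forces $\mathcal{S}^\pi_{s,n}=\mathcal{S}^{\pi^*}_{s,n}$, so no probability mass leaks to states where the two policies differ. A lesser point is that $\max_{\pi}$ sits inside $\EE_{s_1}$ in the operator, so $\pi_{s_1}$ must be chosen as a function of $s_1$; this only strengthens the lower bound and causes no trouble.
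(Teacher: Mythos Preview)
Your argument is correct and mirrors the paper's three-step structure: show $\greedyStepOperatorMath q\le Q^*$ by monotonicity, lower-bound $\greedyStepOperatorMath q$ by instantiating the two maxima with the policy furnished by the hypothesis and rollout length $n{+}1$, then subtract and contract. The only substantive difference is the intermediate comparison object: the paper routes through $\mathcal{B}^{n+1}q$, asserting $\max_{\pi\in\widehat\Pi}(\mathcal{B}^\pi)^n\mathcal{B} q=\mathcal{B}^{n+1}q$ under the hypothesis and then invoking $\|\mathcal{B}^{n+1}q-Q^*\|\le\gamma^{n+1}\|q-Q^*\|$, whereas you compare directly to the $\pi^*$-rollout and use $Q^*=(\mathcal{B}^{\pi^*})^n\mathcal{B} Q^*$ before subtracting. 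Your version is slightly more careful --- the paper's displayed equality is not obvious as stated (the greedy actions defining $\mathcal{B}^{n+1}q$ need not coincide with $\pi^*$), and your explicit $s_1$-dependent choice $\pi_{s_1}$ respects that $\max_\pi$ sits inside $\EE_{s_1}$ in the definition of $\greedyStepOperatorMath$ --- but the underlying idea and the final bound are the same.
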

% TODO: assumption，解释这个 assuption
%Note that we don't 
This theorem provide a sufficient condition for exponential contraction rate -- if there exists one behavior policy executes the optimal action over a period, then the convergence speed can be fasten.
Note that we require one behavior policy to execute the optimal action one \emph{not} on all states but only on a small subset of states that the optimal policy can visit within a period.

\begin{proof}
First, if
%\\
$q\le Q^*$,
%\\
then
%\\
$\mathcal{G} ^{N}_{\widehat{\Pi }} q\le Q^*,
\mathcal{B} ^nq\le Q^*.
$

Second, if for any $s_1 \in \mathcal{S}$ there exists one policy  $\pi \in \widehat \Pi $ such that $\pi(s_{t}) = \pi^*(s_{t})$ for any $s_{t}$ ($1\leq t\leq n$) in $\tau^{s_1} \sim \pi^*$, then
\begin{align*} & \quad
\mathcal{G} ^{N}_{\widehat{\Pi }} q
\\ &
=\max_{\pi \in \widehat{\Pi }} \max_{1\le n'\le N} \left( \mathcal{B} ^{\pi} \right) ^{n'-1}\mathcal{B} q
\\ &
\ge \max_{\pi \in \widehat{\Pi }} \left( \mathcal{B} ^{\pi} \right) ^n\mathcal{B} q
\\ &
=\mathcal{B} ^{n+1}q
\end{align*}

Finally, with the results above, we have
\begin{align*} & \quad
\left\| \mathcal{G} ^{N}_{\widehat{\Pi }} q-Q^* \right\| 
\\ &
\le \left\| \mathcal{B} ^{n+1}q-Q^* \right\| 
\\ &
=\left\| \mathcal{B} ^{n+1}q-\mathcal{B} ^{n+1}Q^* \right\| 
\\ &
\le \gamma ^{n+1}\left\| q-Q^* \right\| 
\end{align*}

\end{proof}

\section{Method}\label{sec_method}

%It is natural to extend the new proposed operators to model-free RL algorithms. 
%We first introduce our new algorithms and then discuss the components of the new algorithms.
%We present our methods under different cases.
%Note that we do not impose any restriction on the \policySetTextMath/. 
%The derived algorithms can utilize off-policy data collected by arbitrary policy.
%these operators are off-policy.

%We first .

%TODO: \textbf{Deterministic and Discrete MDP.}
%TODO: \textbf{Deterministic and Continuous MDP.}
In this section, we first present how to conduct \greedyStepValueIteration/ in model-based seeting and then show how to extend to model-free setting.

\textbf{Model-based RL.} Without loss of generality, we present iteration method with state value function $V$.
$$
V_{k+1}\left( s_0 \right) =\max_{a_0} \mathbb{E} _{s_1}\left[ \max_{\pi \in \widehat{\Pi }} \max_{1\le n\le N} \mathbb{E} _{\tau ^{1:n}_{s_1}\sim \pi}\left[ \sum_{t=0}^{n-1}{\gamma ^t}r\left( s_t,a_t \right) +\gamma ^nV_k\left( s_n \right) \right] \right] 
$$
With environment models, the value functions can be updated by
$$
\mathbf{V}_{k+1}=\max_a \max_{\pi \in \widehat{\Pi }} \max_{1\le n\le N} \left[ \mathbf{R}^a+\sum_{i=0}^{n-2}{\gamma \left( \gamma \mathbf{T}^{\pi} \right) ^i\mathbf{R}^{\pi}}+\gamma \mathbf{T}^a\left( \gamma \mathbf{T}^{\pi} \right) ^{n-1}\mathbf{V}_k \right] 
$$
where 
$\mathbf{V}_k$ is a $\left| \mathcal{S} \right|\times 1$ column vector value function;
$\mathbf{R}^a, \mathbf{R}^\pi$ is a $\left| \mathcal{S} \right| \times 1$ column vector of rewards for action $a$ or policy $\pi$;
$\mathbf{T}^a, \mathbf{T}^{\pi}$ is a $\left| \mathcal{S} \right|\times \left| \mathcal{S} \right|$ matrix of transition probability for action $a$ or policy $\pi$.
Note that the matrix 
$
\sum_{i=0}^{n-2}{\gamma \left( \gamma \mathbf{T}^{\pi} \right) ^i},\gamma \mathbf{T}^a\left( \gamma \mathbf{T}^{\pi} \right) ^{n-1}
$
are fixed during iteration process, which means that it can be computed in advance.
Therefore, the computation complexity of each iteration is 
$\mathcal{O}( \left| \mathcal{A} \right| | \widehat{\Pi } |N\left| \mathcal{S} \right|^2 )$ .

% our \greedyStepValueIteration/ to model-free setting.
%For model-free setting, 
\textbf{Model-free RL.}
Note that all the theorems of the new properties hold for any \policySetTextMath/. This means that we can learn with any off-policy data without additional correction.
The difficulty is that it requires taking expectation over different horizon $n$ and then taking the maximum one.
% TODO: 具体的符号写出来
However, taking expectation can be avoided in deterministic MDP.
%present algorithms for different kinds of MDP.
%We first define a generalized operator for deterministic MDP,
%% for deterministic and continuous MDP.
%$$
%\greedyStepOperatorMathDeterministicContinuousMDP[N][\policyDist]
%Q\left( s_0,a_0 \right) \triangleq \mathbb{E} _{s_1}\left[ \mathbb{E} _{\pi \sim \mathcal{P}  }\mathbb{E} _{\tau ^{1:n}_{s_1}\sim \pi}\left[ \max_{1\le n\le N} \left[ \sum_{t=0}^{n-1}{\gamma ^t}r\left( s_{t}^{},a_{t}^{(\tau )} \right) +\gamma ^n\max_{a_{n}'} Q \left( s_{n}^{},a_{n}' \right) \right] \right] \right] 
%$$
%where $\policyDist(\pi)$ is the probability of selecting policy $\pi$.
%This operator is also a contraction and also converge to $Q^*$ (see \Cref{XX} for the proof). 
%Although it is non-trivial to prove the exponential convergence rate of this operator, it is useful to adapt this operator to the sampled case.
We first define a generalized operator $\widehat{\mathcal{G} }^{N}_{\widehat{\Pi }}$ for deterministic MDP,
\begin{equation}\label{eq_operator_deterministic_MDP}
\widehat{\mathcal{G} }^{N}_{\widehat{\Pi }} Q\left( s_0,a_0 \right) \triangleq \max_{\pi \in \widehat{\Pi }} \max_{\tau _{s_0,a_0}^{1:N}\sim \pi} \max_{1\le n\le N} \left[ \sum_{t=0}^{n-1}{\gamma ^t}r\left( s_{t}^{},a_{t}^{} \right) +\gamma ^n\max_{a_{n}'} Q\left( s_{n}^{},a_{n}' \right) \right] .
\end{equation}
As can be seen, it does not require making expectation.
Instead, it takes the maximum one over different horizon along a trajectory, and then search over all possible trajectories.
%It can be viewed as searching over all possible trajectories and take the maximum value.
\begin{theorem}
Under deterministic MDP, the operator $\widehat{\mathcal{G} }_{N}^{\widehat{\Pi }}$ is also a contraction on the optimal value function $Q^*$.
\end{theorem}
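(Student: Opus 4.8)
The plan is to prove exactly what the parallel statements for $\mathcal{G}^{N}_{\widehat{\Pi}}$ establish, namely that $\widehat{\mathcal{G}}^{N}_{\widehat{\Pi}}$ is a $\gamma$-contraction in $\|\cdot\|_\infty$ whose unique fixed point is $Q^*$; this is what ``contraction on $Q^*$'' means here, and it immediately gives convergence of the induced iteration to $Q^*$. So I would split the argument into (i) $\widehat{\mathcal{G}}^{N}_{\widehat{\Pi}}$ is non-expansive with modulus $\gamma$, and (ii) $\widehat{\mathcal{G}}^{N}_{\widehat{\Pi}} Q^* = Q^*$, then combine.

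For the contraction, first rewrite the operator as a pointwise maximum of elementary maps, mirroring the rewriting $\mathcal{G}^{N}_{\widehat{\Pi}} Q = \max_{\pi}\max_{1\le n\le N}(\mathcal{B}^{\pi})^{n-1}\mathcal{B} Q$ used earlier. For $\pi\in\widehat{\Pi}$, a trajectory $\tau=\tau^{1:N}_{s_0,a_0}\sim\pi$, and $1\le n\le N$, set $g_{\pi,\tau,n}(Q)(s_0,a_0)=\sum_{t=0}^{n-1}\gamma^t r(s_t,a_t)+\gamma^n\max_{a'_n}Q(s_n,a'_n)$, so that $\widehat{\mathcal{G}}^{N}_{\widehat{\Pi}}Q(s_0,a_0)=\max_{\pi,\tau,n} g_{\pi,\tau,n}(Q)(s_0,a_0)$, the maximum taken over the same (finite, since $\mathcal{S},\mathcal{A}$ are finite and $n\le N$) index set for every $Q$. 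For fixed $(\pi,\tau,n)$ the two quantities $g_{\pi,\tau,n}(Q)$ and $g_{\pi,\tau,n}(Q')$ differ only in the last term, so $|g_{\pi,\tau,n}(Q)(s_0,a_0)-g_{\pi,\tau,n}(Q')(s_0,a_0)|=\gamma^n|\max_{a'_n}Q(s_n,a'_n)-\max_{a'_n}Q'(s_n,a'_n)|\le\gamma^n\max_{a'_n}|Q(s_n,a'_n)-Q'(s_n,a'_n)|\le\gamma\|Q-Q'\|_\infty$, using $n\ge1$. Then $|\max_i x_i-\max_i y_i|\le\max_i|x_i-y_i|$ gives $|\widehat{\mathcal{G}}^{N}_{\widehat{\Pi}}Q(s_0,a_0)-\widehat{\mathcal{G}}^{N}_{\widehat{\Pi}}Q'(s_0,a_0)|\le\gamma\|Q-Q'\|_\infty$ for all $(s_0,a_0)$, hence $\|\widehat{\mathcal{G}}^{N}_{\widehat{\Pi}}Q-\widehat{\mathcal{G}}^{N}_{\widehat{\Pi}}Q'\|_\infty\le\gamma\|Q-Q'\|_\infty$. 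This is the proof of Lemma~\ref{theorem_contraction} with the expectation over trajectories replaced by a maximum, which is equally non-expansive.

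For the fixed-point property I would telescope the Bellman optimality equation along an arbitrary trajectory. Since the MDP is deterministic, $Q^*(s,a)=r(s,a)+\gamma\max_{a'}Q^*(s',a')$ with $s'$ the forced successor of $(s,a)$; applying this along $\tau$ and using $\max_{a'}Q^*(s_t,a')\ge Q^*(s_t,a_t)$ at each step yields $Q^*(s_0,a_0)\ge\sum_{t=0}^{n-1}\gamma^t r(s_t,a_t)+\gamma^n\max_{a'_n}Q^*(s_n,a'_n)=g_{\pi,\tau,n}(Q^*)(s_0,a_0)$ for every $\pi,\tau,n$, while the $n=1$ term equals $r(s_0,a_0)+\gamma\max_{a'_1}Q^*(s_1,a'_1)=Q^*(s_0,a_0)$ (and is insensitive to $\pi$'s action choices, since $s_1$ is forced). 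Hence $\widehat{\mathcal{G}}^{N}_{\widehat{\Pi}}Q^*=Q^*$, and combining with the contraction, $\|\widehat{\mathcal{G}}^{N}_{\widehat{\Pi}}Q-Q^*\|_\infty=\|\widehat{\mathcal{G}}^{N}_{\widehat{\Pi}}Q-\widehat{\mathcal{G}}^{N}_{\widehat{\Pi}}Q^*\|_\infty\le\gamma\|Q-Q^*\|_\infty$, so $\widehat{\mathcal{G}}^{N}_{\widehat{\Pi}}$ contracts to $Q^*$; Banach's theorem then makes $Q^*$ its unique fixed point.

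I do not anticipate a real obstacle: the argument tracks the proofs of Lemma~\ref{theorem_contraction} and Theorem~\ref{theorem_greedyStepEquation} almost verbatim. The one point requiring care is the bookkeeping around $\max_{\tau^{1:N}_{s_0,a_0}\sim\pi}$: I must check that this inner maximum (over action sequences in the support of $\pi$) composes cleanly with the outer maxima over $\pi$ and over $n$, which it does because $g_{\pi,\tau,n}(Q)$ depends only on the first $n$ transitions of $\tau$, and because non-expansiveness of $\max$ is indifferent to whether an index ranges over a finite set or an expectation support. A minor caveat worth stating is that $\widehat{\Pi}\neq\emptyset$ and the finiteness of $\mathcal{S},\mathcal{A}$ (hence of the set of length-$\le N$ trajectories) ensure all maxima are attained, so the contraction genuinely identifies $Q^*$.
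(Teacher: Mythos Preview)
Your proposal is correct and mirrors the paper's own strategy for the analogous operator $\mathcal{G}^{N}_{\widehat{\Pi}}$: rewrite the operator as a pointwise maximum of elementary $n$-step maps, use non-expansiveness of $\max$ together with the $\gamma^{n}$-Lipschitz constant of each map to obtain the contraction, and verify that $Q^*$ is fixed by telescoping the deterministic Bellman optimality equation along any trajectory (with equality at $n=1$). The only cosmetic difference is that the paper phrases the stochastic case through the operator identity $\max_{\pi}\max_{n}(\mathcal{B}^{\pi})^{n-1}\mathcal{B}$, whereas you index directly by $(\pi,\tau,n)$; in the deterministic setting your parameterization is the natural counterpart and the two arguments are essentially the same.
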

%TODO:proof This theorem can be easily proved by regarding the trajectory data as the one executed by a non-stationary policy.
% TODO: 其它的一些性质，比如收敛速度

%Under deterministic MDP, we make extension based on \greedyStepOperatorTextMath/. 
%\textbf{For discrete and deterministic MDP}, we make extension based on \greedyStepOperatorTextMath/.
For discrete MDP which using a tabular Q value function, the value function is updated by
%incrementally by
%\begin{scriptsize}
$$
%Q_{k+1}\left( s_0,a_0 \right) =\max_{\tau _{s_0,a_0}\in \mathcal{D} _{s_0,a_0}} \max_{1\le n\le N} \left[ \sum_{t=0}^{n-1}{\gamma ^t}r_t+\gamma ^n\max_{a_{n}'} Q_k\left( s_{n}^{},a_{n}' \right) \right] 
Q_{k+1}\left( s_t,a_t \right) =\max_{\tau _{s_t,a_t}\in \mathcal{D} _{s_t,a_t}} \max_{1\le n\le N} \left[ \sum_{i=0}^{n-1}{\gamma ^i}r_{t+i}+\gamma ^n\max_{a'_{t+n}} Q_k\left( s_{t+n}^{},a'_{t+n} \right) \right] 
$$
where ${\cal D}_{s_t,a_t}$ is the dataset which stores all the trajectories $\traj[]$ starting from $s_t,a_t$.
%$$ $$
%\end{scriptsize}
%The procedure of our algorithm is similar to that of Q learning, except that the update rule of $Q$ function is replaced with our rule (see Line 10).
%where $\alpha$ is the update size.
%Note that we do not need to distinguish which policy collects the data.
%Note that the trajectory data $\traj[]$ collected by arbitrary policy can be utilized.
The algorithm, \emph{\greedyStepQLearning/}, is presented in Appendix \ref{app-sec_algorithm} \Cref{app-alg_GM_QLearning}.
%The resulted algorithm is named \greedyStepDQNFull/ (\greedyStepDQN/), presented in Appendix \ref{app-sec_algorithm} (\Cref{app-alg_GM_DQN}).
%\Cref{alg_GMQLearning} presents the algorithm for tabular $Q$ value function, named \emph{\proposedMethodFull/} (\greedyStepQLearning/).

{For continuous MDP}, the value function $Q_{\theta}(s,a)$ is parametrized by the deep neural network (DNN). The parameter $\theta$ is optimized through the following loss function:
$$
	L\left( \theta \right) =\widehat{\mathbb{E} }\left[ Q_{\theta}\left( s_t,a_t \right) -\max_{\tau _{s_t,a_t}\in \mathcal{D} _{s_t,a_t}} \max_{1\le n\le N} \left( \sum_{i=0}^{n-1}{\gamma ^i}r_{t+i}+\gamma ^n\max_{a'_{t+n}} Q_k\left( s_{t+n}^{},a'_{t+n} \right) \right) \right] ^2
$$
where $Q_{\theta'}$ is the target network parametrized by $\theta'$, which is copied from $\theta$ occasionally.
%where the $n$-step return $\nstepReturn[n][\traj[][s_t,a_t]] $ is computed using the target network $Q_{\theta'}$ parametrized by $\theta'$, which is copied from $\theta$ occasionally.
The resulted algorithm is named \emph{\greedyStepDQN/}, presented in APPENDIX \ref{app-sec_algorithm} \Cref{app-alg_GM_DQN}.

%impose any restriction on the \policySetTextMath/ nor the trajectory data $\traj[][]$. 
Note that our \greedyStepQLearning/ and \greedyStepDQN/ can employ any trajectory data $\traj[][]$ collected by arbitrary policy $\pi$.
This is because \greedyStepOperatorText/ can converge with any \policySetTextMath/ (as stated in \Cref{sec_theorem}).
%This characteristic make the new algorithm.
\T{Together with multi-step bootstrapping}, we have proposed a novel multi-step off-policy algorithm.
Our new proposed algorithms have several advantages than the existing multi-step off-policy methods.
%Moreover, it has several benefits over the existing multi-step off-policy methods.
% TODO: 参照中文版本修改
1) It does \emph{not involve importance sampling.} Only the maximization operation is conducted to approximate the optimal value function.
2) Accordingly, it does \emph{not requires knowing the behavior nor the target policy,} as it does not need to calculate the importance ratio $\pi_{\rm target}(a|s)/\pi_{\rm behavior}(a|s)$.
%Our methods aim to approximate the optimal value function (as what Q learning does) but not to estimate the value of a target policy.
%Thus it does not involve the concept of target policy.
3) Besides, it does not \emph{impose any restriction on the behavior policy nor the target policy.} 
For example, existing methods often require the target policy has positive probability on the action \cite{asis2017multi, precup2000eligibility, munos2016safe}, or the target and the behavior policy are sufficiently close to each other \cite{harutyunyan2016q}.

\subsection{Discussion}
We now discuss several components of the new algorithms.
% methods.characteristics

\textbf{Overestimation.}
%As we have discussed in \Cref{x}, 
For continuous MDP with function approximation, the \greedyReturn/ may lead to an overestimation issue as it greedily chooses the maximal returns.
%Fortunately, abundant effective techniques have been proposed to address such an issue. 
%For example, in Double DQN \cite{hasselt2010double, van2015deep}, the maximal action is decided by the current network rather than the target network, i.e., $ Q_{\theta'}\left(s, \argmax_{a} Q_{\theta}(s,a) \right)$.
%Other works like Maxmin DQN \cite{Lan2020Maxmin} setup several target networks and choose the minimal value, $\min_{m \in \{1,\ldots,M\}} Q_{\theta'_m}(s,a)$, where $\theta'_m$ is the parameter of $m$-th target network. 
%\T{We found our methods performed well with these techniques in practice.}
We found our methods performed well with Double DQN in practice.
%, which can be easily applied to our method.
%In this paper, we use Maxmin DQN to extend our method, as it is simple and can be easily fitted to our method.
%Maxmin DQN setups several target networks and use the minimal one as the estimate, i.e., $Q(s,a)=\min_{m \in \{1,\ldots,M\}} Q_{\theta'_m}(s,a)$, where $\theta'_m$ is the parameter of $m$-th target network. 
%Each $i$-th target network $Q_{\theta'_i}$ copy parameter
%We investigate how the number of target networks affect the overestimation issue and the corresponding Q value.
Fortunately, abundant approaches have been proposed to address this issue, such as Double DQN \cite{hasselt2010double, van2015deep} and Maxmin DQN \cite{Lan2020Maxmin}.
%, which can be easily applied to our method.
In this paper, we use Maxmin DQN to extend our method, as it is simple and can be easily fitted to our method.
Maxmin DQN setups several target networks and use the minimal one as the estimate, i.e., $Q(s,a)=\min_{m \in \{1,\ldots,M\}} Q_{\theta'_m}(s,a)$, where $\theta'_m$ is the parameter of $m$-th target network. 

\iffalse
\textbf{Initialization (of the value function).}
As we have discussed in \Cref{sec_convergence_speed}, our operators generally converges faster.
Unless  $ Q\K[0] \leq Q^* $.
%and the \policySetText/ contains high-quality policies.
However, the optimal value function $Q^*$ is unknown in practice. 
We can reply on prior knowledge of the task or initialize it with a small enough value empirically. 
In this paper, for \greedyStepQLearning/, we set the initial Q value empirically with the prior knowledge of the environment.
%study the effect of initial Q value.
For \greedyStepDQN/, we rely on the minimization over multiple random initialized target networks to initialize a small enough Q value.
We leave for future work on how to design a more proper initialization method.
\fi

%Two measures can be adopted to accelerate the computation process.
%First, the maximization .
%First, a fragment of trajectory is sampled as the batch data. We 

%Such computation can be accelerated by .

%\textbf{Maximal \step/.}
\textbf{Hyperparameter. }
The maximal \step/ $N$ is the main hyperparameter of our algorithm, which decide the maximal horizon that the algorithm looks forward in time.
Larger $N$ can allow faster information propagation but tends to suffer from the overestimation issue.
However, such an issue can be well addressed by the overestimation reduction techniques above.
%when the algorithm is equipped with the overestimation reduction techniques above.
In practice, we set $N=T-t$, which is the horizon between the current timestep until the end of the trajectory.
This means that the algorithm will look forward in time until the end of the game.
Especially in reward delay task which only provides a reward at the end, 
our method can sufficiently exploit its superiority.
%This makes our algorithm exploit its advantage.
%making our algorithm sufficiently exploit its advanatge 

% Note that in practice the trajectory are usually restricted in a limited length.
%The \step/ will be clipped when exceeding the end of the trajectory.
%$N=\infty$ means that the maximal step is the end of the trajectory.

%\footnote{
%In practice the length of the trajectory is limited.
%In practice the length of the episode are usually , which means that the le.
%For the $n$-step return, $\nstepReturn[n][\traj[]]$ , if the step $n$ exceeds the trajectory length $T$, i.e., $t+n > T$, then the $n$-step return is the accumulated reward until termination, i.e., $\nstepReturn[n][\traj[]]=\sum_{n=0}^{T-t-1} \gamma^n r_{t+n}$. 
%}. 
%This means that we leave it to our algorithm to automatically choose the \step/.
%, and the algorithm is equipped with overestimation reduction technique .

\textbf{Computation Complexity.}
\greedyStepDQN/ requires maximization over $N$ values for each of the batch data.
Two measures can be used to highly accelerate the computation process.
First, the \greedyReturn/ at timestep $t$ can be computed iteratively based on the result at the next timestep $t+1$.
%$s_{t+1},a_{t+1}$, 
%$$
\begin{equation}\label{eq_greedyStepDQN_accleration}
%\begin{aligned}
\max_{1\le n\le T-t} R_{n}^{Q}(\tau _{s_t,a_t})
=r_t+\gamma \max \left\{ \max_{a_{t+1}^{\prime}} Q(s_{t+1},a_{t+1}^{\prime}),\max_{1\le n\le T-(t+1)} R_{n}^{Q}(\tau _{s_{t+1},a_{t+1}}) \right\} 
%\end{aligned}
\end{equation}
%$$
See APPENDIX \ref{app-sec_algorithm} for the proof.
By this equation our method only needs to do maximization between two adjacent values instead of the values over the entire trajectory.
Second, we sample a complete trajectory and use the equation above to compute the result iteratively for each state-action.
% $s_t,a_t$.
% in the trajectory.
In our implementation, our \greedyStepDQN/ requires almost equal training time with DQN.
See APPENDIX \Cref{app-sec_accelerate} for more detail.

\iffalse
Compared to DQN, \greedyStepDQN/ requires 1) $B$ times of maximization over $N$ values; 2) $B \times N$ times of forwarding the state-actions to the Q model (that of DQN is $B$), where $B$ denote the batch size.
Such a process can be highly accelerated.
First, the maximization can be conducted over $2$ values instead of $N$ values.
% over $N$ values can be transformed to maximization over $2$ values. 
Specially, the \greedyReturn/ on state-action $s_t,a_t$ can be computed based on that of $s_{t+1},a_{t+1}$, 
$$
%\begin{equation}
%\begin{aligned}
\max_{1\leq n \leq \infty}  \nstepReturn[n][\traj[]]
= 
r_t + \gamma 
\max
\left\{
\max_{a'_t} Q(s_t, a'_t) ,
\max_{1\leq n \leq \infty}  \nstepReturn[n][\traj[][s_{t+1},a_{t+1}]]
\right\}.
%\end{aligned}
%\end{equation}
$$
Besides, the maximization of a $B \times N$ matrix does not cost much when using a GPU.
Second, for each batch data, we sample a fragment of $B=N$ successive state-actions as the update ones. 
\fi
%Thus the Q values can be reutilized . 

%In this paper, we aim to address this issue from two perspectives.
%To address the issue cased by the approximation bias during the training process of DNN, we use .

%\textbf{}
%

%Another important component of \greedyStepQLearning/ is the initialization of the $Q$ value function.
%This component is also related to the overestimation issue.

%\textbf{Importance Sampling.}

%\iffalse

\section{Related Work}\label{sec_related_work}

%There also exists several operators.
%Most of the previous operators 
%The classical Bellman Operator  .

Most methods on multi-step RL are mainly conducted under the umbrella of policy evaluation, whose goal is to evaluate the value of a target policy. This goal naturally allows multi-step bootstrapping.
% as we can just average over the trajectories behave.
%On-policy methods abandon the off-policy data which may incur instability \citep{tesauro1995temporal, boyan1999least}. 
Formally, the underlying operator, \emph{Multi-Step On-Policy Bellman Operator}, is defined as
\begin{equation}\label{eq_multiStepOnPolicyOperator}
\begin{aligned}
&(\multiStepOnPolicyOperatorMath Q)(s,a) \triangleq 
	\EE_{\traj  } 
	& \Bigg[
		\left.
		\sum_{n=0}^{N-1} \gamma^n r_{t+n} + \gamma^N  Q( s_{t+N}, a_{t+N} )
		\right|
		\parbox{0.46in}{
		\noindent
		$s_t=s$\\$a_t=a$\\$\policySet,N,\pi$
		}
	\Bigg]
\end{aligned}
\end{equation}
The related implementations include multi-step SARSA \citep{sutton2018reinforcement}, Tree Backup \citep{precup2000eligibility}, Q($\sigma$) \citep{asis2017multi}, and Monte Carlo methods (can be regarded as $\infty$-step SARSA).
% \citep{sutton2018reinforcement} which do not bootstrap and 
Such $N$-step bootstrapping can be mixed through exponentially-decay weights as $N$ increases, known as $\lambda$-returns \citep{sutton2018reinforcement,Schulman2016HighDimensional,WhiteW16a}.
The corresponding operator is defined as $ \multiStepOnPolicyOperatorMath[\lambda] \triangleq (1-\lambda) \sum_{ N= 1 }^\infty { \lambda ^N }  \multiStepOnPolicyOperatorMath  $.
%, which assign exponentially-decay weight as $N$ becomes larger. 
\citeauthor{SharmaRJR17} proposed a generalization of $\lambda$-returns, named weighted-returns \cite{SharmaRJR17}. This method assigns a weight to each \step/ $N$,  $ \multiStepOnPolicyOperatorMath[\mathbf{w}] \triangleq \sum_{ N= 1 }^{\infty} { \mathbf{w}_{N} }  \multiStepOnPolicyOperatorMath[N]  $, where $\sum_{N=1}^\infty \mathbf{w}_{N} = 1 $. 
Our method can also be regarded as a special case of weighted-returns, where the weights $\mathbf{w}$ is adaptively adjusted by 
$
\mathbf{w}_{N} = 
\begin{cases}
1 & \text{if } N=\argmax_{N'} \nstepReturn[N'] \\
0 & \text{otherwise}
\end{cases}
$.
Roughly, the \step/ $N$, the decay factor $\lambda$, and the weights $\mathbf{w}$ represents the prior knowledge or our bias on the \step/, and usually has to be tuned in a case-by-case manner. While our method can adaptively adjust the \step/ according to the quality of the data and the learned value function.
%}

\ifISWORD
\else
%\ifshowfig
\begin{table*}[!b]
%		\newlength{\widthtmpadhdhd}
%		\setlength{\widthtmpadhdhd}{0.8in}
%				\setlength{\tabrowsep}{0pt}
		\setlength{\tabcolsep}{0pt} 
		\renewcommand\theadfont{\tiny}
		\renewcommand\theadgape{\Gape[0in]}
		\renewcommand\cellgape{\Gape[0in]}
		\scriptsize
		\centering
			\begin{tabular}{@{}
				+m{1.7in}<{\centering}
				Ym{0.6in}<{\centering}
				Ym{1.1in}<{\centering}
				Ym{0.70in}<{\centering}
				Ym{0.70in}<{\centering}
				Ym{0.70in}<{\centering}
%				Ym{\widthtmpadhdhd}<{\centering}Ym{\widthtmpadhdhd}<{\centering}Ym{0.8in}<{\centering}
				@{}}
				\toprule
Operator   & Converge to &  {Contraction Rate}  & \thead{{NOT requiring} \\ {off-policy correction}} & \thead{{NOT requiring} \\ {knowing policy}} & \thead{Support adaptively \\ adjusting step size}
					\\
										\noalign{\smallskip}
				\midrule
				One-Step Optimality Operator $\oneStepOperatorMath$ 
%				(\eq \ref{eq_oneStepOperator})  
				& $\color{blue}{Q^*}$	& $\color{red}{\gamma}$   & \checkmarkcolored  & \checkmarkcolored	& \notcheckmarkcolored	\\
				Multi-Step Optimality Operator $\multiStepOperatorMath$ 
%				(\eq \ref{eq_multiStepOperator})
				& $\color{red}{Q^*_{ \multiStepOperatorMath }(\leq Q^*)}$  & $\color{blue}\gamma^N$ & \checkmarkcolored  & \checkmarkcolored & \notcheckmarkcolored 	  	 \\
%				{Multi-Step On/Off-Policy Bellman Operator $\multiStepOnPolicyOperatorMath, \multiStepOffPolicyOperatorMath$ 
				Multi-Step On/Off-Policy Operator 
%				(\eq \ref{eq_multiStepOnPolicyOperator}, \ref{eq_multiStepOffPolicyOperator})  
				& $\color{red}{Q^\pi}$  & $\color{blue}\gamma^N$ & \notcheckmarkcolored  & \notcheckmarkcolored & \notcheckmarkcolored   	\\
				\textbf{\greedyStepOperatorText/} $\greedyStepOperatorMath$ 
%				(our, \eq \ref{eq_greedyMultiStepOperator})   
				& $\color{blue}{Q^*}$  &   	\color{blue}{$\gamma$ ($\gamma^2$, $\gamma^N$ under some conditions)} & \checkmarkcolored & \checkmarkcolored & \checkmarkcolored	\\
				\bottomrule
			\end{tabular}
			\caption{
			Properties of the operators.
			}\label{table_summary}
	\end{table*}
%\fi
\fi

Although these policy evaluation-based methods can easily conduct multi-step learning, they usually require additional correction operation for off-policy data collected from other behavior policies. It is classical to use importance sampling (IS) correction,
%	\begin{footnotesize}
	\begin{small}
	\begin{equation}\label{eq_multiStepOffPolicyOperator}
	\begin{aligned}
	& (\multiStepOffPolicyOperatorMath Q)(s,a) \triangleq 
		\EE_{ \pi'\sim \policyDistMath, \traj[\pi']  } 
		& \Bigg[
			\left.
			\sum_{n=0}^{N-1} \gamma^n \zeta_{t+1}^{t+n}  r_{t+n} + \gamma^N \zeta_{t+1}^{t+N}   Q( s_{t+N}, a_{t+N} )
			\right|
			\parbox{0.38in}{
			\noindent
			$s_t=s$\\$a_t=a$\\$\policyDistMath,N,\pi$
			}
		\Bigg]
	\end{aligned}
	\end{equation}
	\end{small}
where $\zeta_{t+1}^{t+n} \triangleq \prod _{ t'=t+1 }^{t+n} {  \frac{ \pi(a_{t'} |s_{t'} ) }{ \pi'(a_{t'}|s_{t'}) } } $ is the \emph{importance sampling (IS) ratio}.
The multiple product terms of IS make the value suffer from high variance. Such issue has motivated further variance reduction techniques, e.g., TB($\lambda$) \citep{precup2000eligibility},  Q($\lambda$) \citep{harutyunyan2016q}, Retrace($\lambda$) \citep{munos2016safe}.
However, these methods are often limited with the restriction on the distance between the target and the behavior policy.
%, or it is not efficient under the near on-policy case caused by the additional correction operation.
%on the distance between the target and the behavior policy.
Besides, for these methods, we need to know not only the trajectory but also the behavior policy, i.e., the likelihoods of choosing the behavior action of the behavior policy, $\pi'(a_t|s_t)$.
Our work does not need additional correction and can safely utilize arbitrary data.
%they usually suffer from certain undesired .

%\subsection{Multi-Step Methods}
%We have finished the convergence property of \greedyStepOperatorTextMath/.
There also exists works that follow the idea of value iteration.
\citeauthor{efroni2018beyond} proposed methods to search over action space over multiple steps. 
However, this problem is intractable in computation and they take it as a surrogate MDP problem \cite{tomar2020multi}.
%proposed one-step methods to solve this problem.
% rely on one-step methods to solve this problem. 
% They take this problem as a surrogate MDP problem \cite{tomar2020multi},
While our methods search over the various rollout data of various behavior policies.
\citeauthor{horgan2018distributed} combined the multi-step trajectory data and the estimated value of the most promising action \citep{horgan2018distributed,barth2018distributed}, which can also be regarded as a type of value iteration. Such methods are simple and have shown promising results in practice. However, it is not clear what the value function learns.
We analyze the convergence properties of the underlying operator, \multiStepOperatorTextMath/, which is defined as
$$
\mathcal{B} ^{N}_{\widehat{\Pi }} Q\left( s_0,a_0 \right) \triangleq \max_{\pi \in \widehat{\Pi }} \mathbb{E} _{\tau _{s_0,a_0}\sim \pi}\left[ \sum_{t=0}^{N-1}{\gamma ^t}r\left( s_{t}^{},a_{t}^{} \right) +\gamma ^N\max_{a_{N}'} Q\left( s_{N}^{},a_{N}' \right) \right] 
$$

%, which is presented in \cref{eq_multiStepOperator}. 
% of this operator.
%Another problem worth concerning is the convergence property of \multiStepOperatorTextMath/ (\eq \ref{eq_multiStepOperator}).
%Whether it can also converge to the optimal value function $Q^*$?
%Now we will analyze the property of \multiStepOperatorTextMath/ (\eq \ref{eq_multiStepOperator}).
%, which has not been revealed in literatures to best of our knowledge.
\begin{lemma}\label{theorem_multiStepOperator_contraction}
%(contraction of \multiStepOperatorTextMath/)
For any two vectors $q, q' \in  \R ^{ |\rls||\rla| } $, $N\geq 1$, 
$
	\|  \multiStepOperatorMath q - \multiStepOperatorMath q' \| \leq \gamma^N 	\|   q -  q' \|.
$
\end{lemma}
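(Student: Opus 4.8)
The plan is to mirror the chain of inequalities in the proof of \Cref{theorem_contraction}, specialized to a fixed step $n=N$. First I would rewrite the operator as a pointwise maximum of compositions of one-step operators. Unrolling the expectation over the $N$-step rollout of a fixed policy $\pi$ gives
$$\mathbb{E}_{\tau_{s_0,a_0}\sim\pi}\!\left[\sum_{t=0}^{N-1}\gamma^t r(s_t,a_t)+\gamma^N\max_{a_N'}Q(s_N,a_N')\right]=\big((\mathcal{B}^\pi)^{N-1}\mathcal{B}Q\big)(s_0,a_0),$$
so that $\multiStepOperatorMath Q=\max_{\pi\in\widehat{\Pi}}(\mathcal{B}^\pi)^{N-1}\mathcal{B}Q$ — the same decomposition used in the proof of \Cref{theorem_greedyStepEquation}, but without the inner maximum over the step $n$.

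Next I would use the elementary pointwise bound $\big|\max_\pi f_\pi(x)-\max_\pi g_\pi(x)\big|\le\max_\pi|f_\pi(x)-g_\pi(x)|$, which gives $\|\max_\pi f_\pi-\max_\pi g_\pi\|\le\max_\pi\|f_\pi-g_\pi\|$. Applying this with $f_\pi=(\mathcal{B}^\pi)^{N-1}\mathcal{B}q$ and $g_\pi=(\mathcal{B}^\pi)^{N-1}\mathcal{B}q'$ reduces the claim to bounding $\|(\mathcal{B}^\pi)^{N-1}\mathcal{B}q-(\mathcal{B}^\pi)^{N-1}\mathcal{B}q'\|$ for each fixed $\pi$. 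For that, I invoke the standard fact that the Bellman expectation operator $\mathcal{B}^\pi$ is $\gamma$-Lipschitz in the sup norm, so iterating it $N-1$ times contracts by $\gamma^{N-1}$; combining with $\|\mathcal{B}q-\mathcal{B}q'\|\le\gamma\|q-q'\|$ (already established within the proof of \Cref{theorem_contraction}) yields $\gamma^{N-1}\cdot\gamma\|q-q'\|=\gamma^N\|q-q'\|$. Taking the maximum over $\pi\in\widehat{\Pi}$ preserves this bound.

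I do not expect a genuine obstacle: the argument is essentially the $n=N$ specialization of the inequality chain in \Cref{theorem_contraction}. The only mild care points are verifying the identity $\mathbb{E}_{\tau\sim\pi}[\cdots]=(\mathcal{B}^\pi)^{N-1}\mathcal{B}Q$ and the $\gamma$-Lipschitz property of $\mathcal{B}^\pi$ for general (not necessarily deterministic) MDPs, both of which are routine.
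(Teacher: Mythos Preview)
Your proposal is correct and follows essentially the same approach the paper uses: rewrite the operator as $\max_{\pi\in\widehat{\Pi}}(\mathcal{B}^\pi)^{N-1}\mathcal{B}Q$, pull the max through the norm via $\|\max_\pi f_\pi-\max_\pi g_\pi\|\le\max_\pi\|f_\pi-g_\pi\|$, and then apply the $\gamma$-contraction of $\mathcal{B}^\pi$ iterated $N-1$ times together with that of $\mathcal{B}$. This is exactly the chain of inequalities in the proof of \Cref{theorem_contraction} with the inner $\max_{1\le n\le N}$ removed, so the bound sharpens from $\max_n\gamma^{n-1}\cdot\gamma=\gamma$ to $\gamma^{N-1}\cdot\gamma=\gamma^N$.
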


%When $N=1$, $Q^*_{\multiStepOperatorMath[1]} = Q^*$, which means $\oneStepOperatorMath Q^* = Q^*$ (\ttt{known as} Bellman Optimality Equation). 
\begin{theorem}\label{theorem_multiStepOperator_convergence}
(the \ttt{fix point} of \multiStepOperatorTextMath/)
Let $ Q^{*}_{\multiStepOperatorMath} $ denote the fix point of $\multiStepOperatorMath$, i.e., $\multiStepOperatorMath Q^{*}_{\multiStepOperatorMath} = Q^{*}_{\multiStepOperatorMath} $.
For any $N \geq 2$, $Q^*_{\multiStepOperatorMath} \leq Q^* $. 
%If and only if $\policySet$ satisfies for any $\pi \in \{ \pi| \policySet(\pi) > 0 \} $ $\pi(s)=\argmax_a Q^*(s,a)$, then $Q^*_{\multiStepOperatorMath} = Q^*$.
If for any $s_0 \in \mathcal{S}$ there exists one policy  $\pi \in \widehat \Pi $ such that $\pi(s_{t}) = \pi^*(s_{t})$ for any $s_{t}$ ($0\leq t\leq N$) in $\tau_{s_0} \sim \pi^*$, then $Q^*_{\multiStepOperatorMath} = Q^*$.
%As long as there exists $\pi \in \{ \pi| \policySet(\pi) > 0 \} $ and one state $s$ such that $\pi(s)\neq \pi^*(s)$, then $Q^*_N < Q^* $. % Wrong
\end{theorem}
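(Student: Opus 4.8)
The plan is to re‑use the operator‑rewriting trick from the proof of \Cref{theorem_greedyStepEquation}, namely $\mathcal{B}^N_{\widehat{\Pi}}Q=\max_{\pi\in\widehat{\Pi}}(\mathcal{B}^\pi)^{N-1}\mathcal{B}Q$. Two structural facts follow at once: $\mathcal{B}^N_{\widehat{\Pi}}$ is monotone (a pointwise maximum of compositions of the monotone operators $\mathcal{B}^\pi$ and $\mathcal{B}$), and by \Cref{theorem_multiStepOperator_contraction} it is a $\gamma^N$‑contraction in $\|\cdot\|_\infty$, hence it has a unique fixed point $Q^*_{\mathcal{B}^N}$ and $(\mathcal{B}^N_{\widehat{\Pi}})^k Q\to Q^*_{\mathcal{B}^N}$ for every $Q$. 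All of the work is then to compare this fixed point with $Q^*$ by probing the operator at $Q^*$.

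For the first claim I would apply $\mathcal{B}^N_{\widehat{\Pi}}$ to $Q^*$ itself. Since $\mathcal{B}Q^*=Q^*$ and, for every $\pi$, $\mathcal{B}^\pi Q^*\le \mathcal{B}Q^*=Q^*$ (an $a'\sim\pi$ average is dominated by $\max_{a'}$), monotonicity of $\mathcal{B}^\pi$ gives $(\mathcal{B}^\pi)^{N-1}Q^*\le Q^*$ for every $\pi$, hence $\mathcal{B}^N_{\widehat{\Pi}}Q^*=\max_\pi(\mathcal{B}^\pi)^{N-1}Q^*\le Q^*$. Iterating and using monotonicity, $(\mathcal{B}^N_{\widehat{\Pi}})^k Q^*\le Q^*$ for all $k$, and passing to the limit yields $Q^*_{\mathcal{B}^N}\le Q^*$. (For $N=1$ the operator is exactly $\mathcal{B}$ and equality holds; the statement isolates the $N\ge 2$ regime, where the gap can be strict.)

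For the second claim it suffices to show $Q^*$ is itself a fixed point, i.e. $\mathcal{B}^N_{\widehat{\Pi}}Q^*=Q^*$; uniqueness of the fixed point then forces $Q^*_{\mathcal{B}^N}=Q^*$. Since the first claim already gives $\mathcal{B}^N_{\widehat{\Pi}}Q^*\le Q^*$, I only need $\mathcal{B}^N_{\widehat{\Pi}}Q^*\ge Q^*$, i.e. for each $(s_0,a_0)$ some $\pi\in\widehat{\Pi}$ with $(\mathcal{B}^\pi)^{N-1}Q^*(s_0,a_0)=Q^*(s_0,a_0)$. The pivotal identity is $\mathcal{B}^{\pi^*}Q^*=Q^*$: because $Q^*(s',\pi^*(s'))=V^*(s')$, we get $\mathcal{B}^{\pi^*}Q^*(s,a)=\mathbb{E}[r(s,a)+\gamma V^*(s')]=Q^*(s,a)$, hence $(\mathcal{B}^{\pi^*})^{N-1}Q^*=Q^*$. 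Now take the policy $\pi\in\widehat{\Pi}$ furnished by the hypothesis at $s_0$, so $\pi(s_t)=\pi^*(s_t)$ on every state of $\tau_{s_0}\sim\pi^*$. An induction on the rollout depth shows that the $(N-1)$‑step $\pi$‑rollout from $(s_0,a_0)$ defining $(\mathcal{B}^\pi)^{N-1}Q^*(s_0,a_0)$ only ever visits states lying on $\tau_{s_0}\sim\pi^*$, on which $\pi$ acts identically to $\pi^*$; therefore $(\mathcal{B}^\pi)^{N-1}Q^*(s_0,a_0)=(\mathcal{B}^{\pi^*})^{N-1}Q^*(s_0,a_0)=Q^*(s_0,a_0)$, and maximising over $\widehat{\Pi}$ gives the required $\ge$.

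The hard part will be the rollout‑matching induction in the second claim. In a deterministic MDP it is immediate, since the successor of $(s_0,a_0)$ is unique and the whole $\pi$‑trajectory then tracks the $\pi^*$‑trajectory. In a stochastic MDP, when $a_0\neq\pi^*(s_0)$ the successors $s_1\in\mathrm{supp}\,\mathcal{T}(\cdot\mid s_0,a_0)$ need not lie on $\tau_{s_0}\sim\pi^*$, so one must invoke the hypothesis at each such $s_1$ and argue the per‑state witness policies can be combined into a single member of $\widehat{\Pi}$ — they agree wherever they overlap, all selecting optimal actions, so the real content is that $\widehat{\Pi}$ is rich enough to contain such an amalgam (equivalently, that some $\pi\in\widehat{\Pi}$ equals $\pi^*$ on every state reachable by $\pi^*$). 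I would state this hypothesis reading explicitly; the contraction/monotonicity bookkeeping and the identity $\mathcal{B}^{\pi^*}Q^*=Q^*$ are routine.
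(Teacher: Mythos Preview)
Your approach is correct and matches the paper's template: rewrite $\mathcal{B}^N_{\widehat{\Pi}}=\max_{\pi}(\mathcal{B}^\pi)^{N-1}\mathcal{B}$, evaluate at $Q^*$ using $\mathcal{B}Q^*=Q^*$ and $(\mathcal{B}^\pi)^kQ^*\le Q^*$, then iterate via monotonicity and pass to the limit with \Cref{theorem_multiStepOperator_contraction}. The paper defers its proof of this theorem to the appendix, but the argument there follows exactly the same operator-composition pattern already exhibited in the proof of \Cref{theorem_greedyStepEquation}, and your first claim is handled identically.

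The concern you flag at the end is a genuine subtlety in the \emph{statement}, not a defect in your reasoning. Because the $\max_{\pi\in\widehat{\Pi}}$ in the definition of $\mathcal{B}^N_{\widehat{\Pi}}$ sits \emph{outside} the expectation over trajectories, a single witness $\pi$ must serve every $s_1\in\mathrm{supp}\,\mathcal{T}(\cdot\mid s_0,a_0)$ simultaneously, whereas the hypothesis as literally written only furnishes one witness per starting state. In the deterministic setting---which is the paper's principal operational regime (cf.\ \Cref{eq_operator_deterministic_MDP} and the derived model-free algorithms)---there is a unique $s_1$ and your choice $\pi=\pi_{s_1}$ works cleanly, with the rollout-matching induction being exactly the one-line observation that once $\pi$ agrees with $\pi^*$ at $s_1$ the subsequent states stay on the $\pi^*$-trajectory from $s_1$. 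In the stochastic setting your proposed strengthening (that $\widehat{\Pi}$ contain a single policy agreeing with $\pi^*$ on every state reachable under $\pi^*$ from each possible successor) is indeed what the argument needs; you are right to state this reading explicitly rather than leave it implicit.
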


The theorems above imply that \multiStepOperatorTextMath/ ($N\geq 2$) converges faster than \oneStepOperatorTextMath/ does.
However, it generally converges to a suboptimal value function --- converging to optima only happens when \emph{all} the behavior policies are optimal, which is almost never guaranteed in practice.
%\T{To best of our knowledge, this the first time giving this formal statement on the sub-optimal convergence of \multiStepOperatorText/.}
%It can converge to optimal value function if and only if all the behavior policy are optimal.

%, but \emph{not} to estimate the optimal value function.
%The value function is defined as $Q^\pi (s,a) \triangleq \E_{\tau_{\infty, \pi}^{s_t,a_t}} \left[ \sum_{n=0}^{\infty} \gamma^n r_{t+n} | s_t=s,a_t=a, \pi \right] $, which is the expected long-term return by executing policy $\pi$.
%a target policy to evaluate its value function at each iteration.
%We discuss the following specific policy evaluation methods.

%	to compute the importance sampling $ \prod_{t'=0}^t \frac {\pi_{\rm target}(a_{t'}|s_{t'})}{\pi_{\rm{behavior}}( a_{t'}|s_{t'} ) } $.

%	\T{This may limit the ability of PI on approximating the optimal value function in some cases.}
%	This implies that the estimated value function is determined by the current policy.

%	Above all, the main issue for both these two methods is that theoretically it will converge to the value function of a specific policy, which \T{is not a cost effective goal}. 
%	\T{In practice, relaxing the requirement on the accuracy of policy evaluation may reduce such cost. However, inaccurate estimation may incur other unknown issues beyond control.}
%	May remove
%	This theorem implies that both these two operators converge to the value function of $Q\pi$ but \emph{not} $Q^*$.
%	TOD: however, the variance of these two methods may be different, please refer to.

In summary, our \greedyStepOperatorTextMath/ can converge to the optimal value function with a rate of $\gamma$ (or $\gamma$, $\gamma^N$ with some condition). It does \emph{not} require off-policy correction and only requires access to the trajectory data while not necessary the behavior policy $\pi$ (i.e., the policy distribution $\pi(\cdot|s)$). Furthermore, it can adaptively adjust the step size by the quality of the trajectory data.
To the best of our knowledge, none of the \T{existing operators own all of these properties together.}
We summarize the properties of all the referred operators in \Cref{table_summary}.

\section{Experiment}\label{sec_experiment}
%We conducted experiments to investigate whether the proposed methods could improve ability in restricting the policy and accordingly benefit the learning.
%We will first describe the experimental setup.
%Then the effect on restricting policy and improving performance will be presented.
%Finally, comparison with several baseline methods and the state-of-the-art methods will be demonstrated.
We designed our experiments to investigate the following questions.
1) Can our methods \greedyStepDQN/ (\greedyStepQLearning/) improve previous methods, DQN (Q learning)? 
How does it compare with the state-of-art algorithm?
%multi-step and off-policy methods?
%2) How does our algorithm behave in choosing the \step/?
%Can multi-step and off-policy learning components of our algorithm benefit the algorithm? How will the \greedyReturn/ choose the \step/?
2) {What is the effect of the components of our methods, e.g., 
%the Maxmin Q-networks, 
the overestimation reduction technique}?
% TODO: 这里表述的有问题
3) How does our algorithm behave in practice? 
For example, What are the sizes of the bootstrapping steps during the training phase?
%How does the initial Q value affect the convergence speed?

We implement our algorithms, \greedyStepQLearning/ and \greedyStepDQN/, by extending Q learning and Maxmin DQN respectively.
For all tasks, we set the maximal \step/ $N=T-t$, which is the length between the current timestep and the end of the trajectory.
%TODO: The experiment details are provided in Appendix \ref{app-sec_implementation}.

%The following algorithms are compared.
We choose \emph{DQN} and \emph{Maxmin DQN} as baselines. We also compare with several state-of-the-art multi-step off-policy methods.
%(a) \emph{Q learning}: a classical algorithm \citet{sutton2018reinforcement}. 
%The $\delta$ of \proposedMethodhybrid/ is set to be slightly larger than that of \pmethodkl/ due to the existence of the \rollback/ mechanism.
%We also tried $\delta=0.025$ but found it is not better than $\delta=0.03$ with .
\emph{Multi-Step DQN}: a vanilla multi-step version of Q learning without any off-policy correction \citep{horgan2018distributed, barth2018distributed}.
\emph{Multi-Step SARSA}: a classical one-step on-policy algorithm \citep{sutton2018reinforcement}.
%We adopt the implementations provided in \citep{haarnoja2018soft}. 
All the proposed methods adopt the same implementations to ensure that the differences are due to the algorithm changes instead of the implementations. 
We reused the hyper-parameters and settings of neural networks in \cite{Lan2020Maxmin}, in which the learning rate was chosen from $[3 \times 10^{-3}; 10^{-3}; 3 \times 10^{-4}; 10^{-4}; 3 \times 10^{-5} ]$.
For our \greedyStepDQN/, the number of target networks was chosen from $[1,2,4,6]$, while for Maxmin DQN it was chose from [2,3,4,5,6,7,8,9] and we use the results reported in \cite{Lan2020Maxmin}.

\subsection{Performance}\label{sec_performance}

\ifISWORD
\else
\begin{figure}[t]
    \centering
       	\def\widthproperty{0.31}
  	   	\centering{
   		\subfloat[Choice]{
   			\label{fig_TheChoice}
			\includegraphics[width=\widthproperty\linewidth]{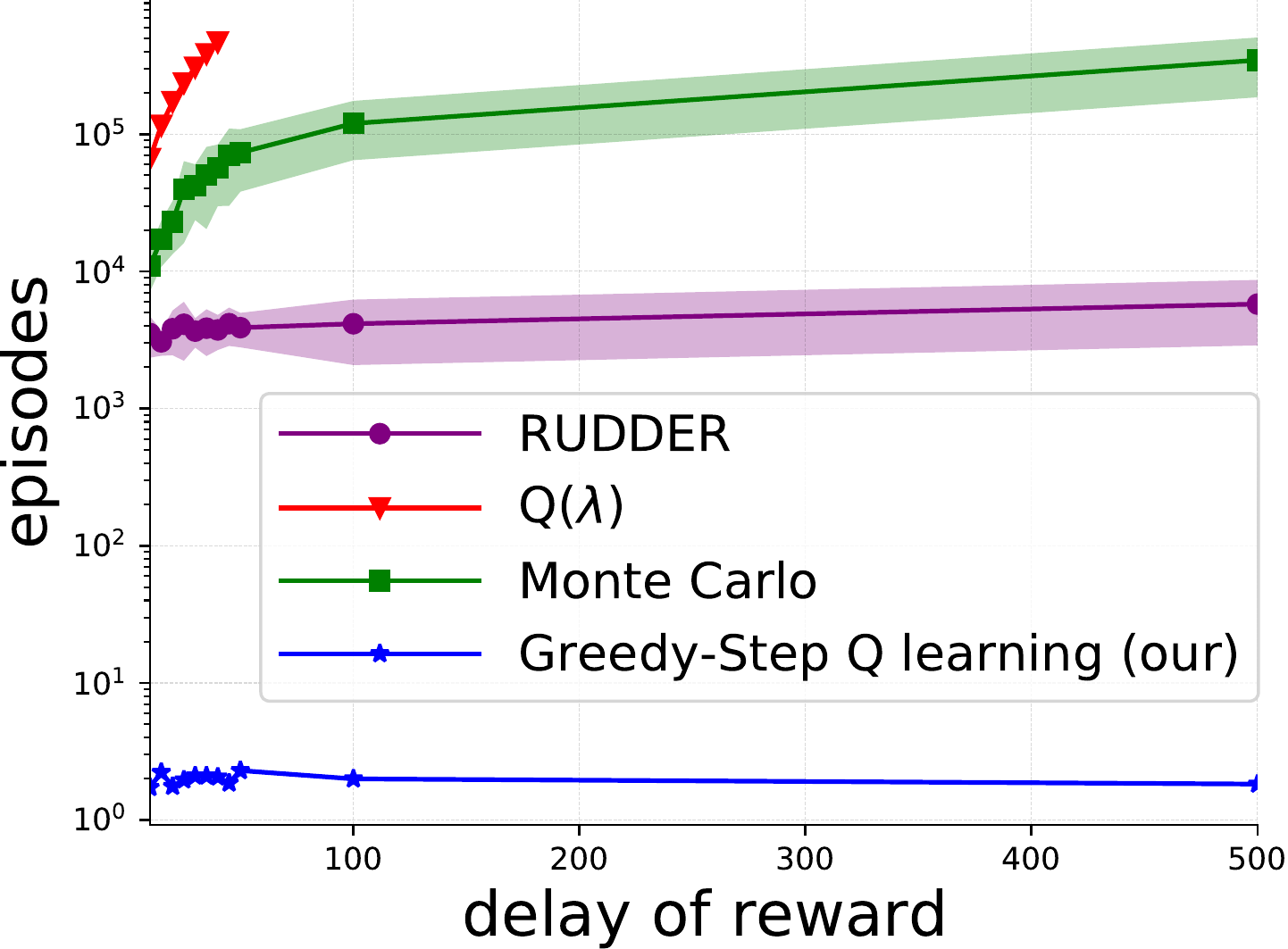}
   		}\hfill
   		\subfloat[Trace Back]{
   			\label{fig_TraceBack}
			\includegraphics[width=\widthproperty\linewidth]{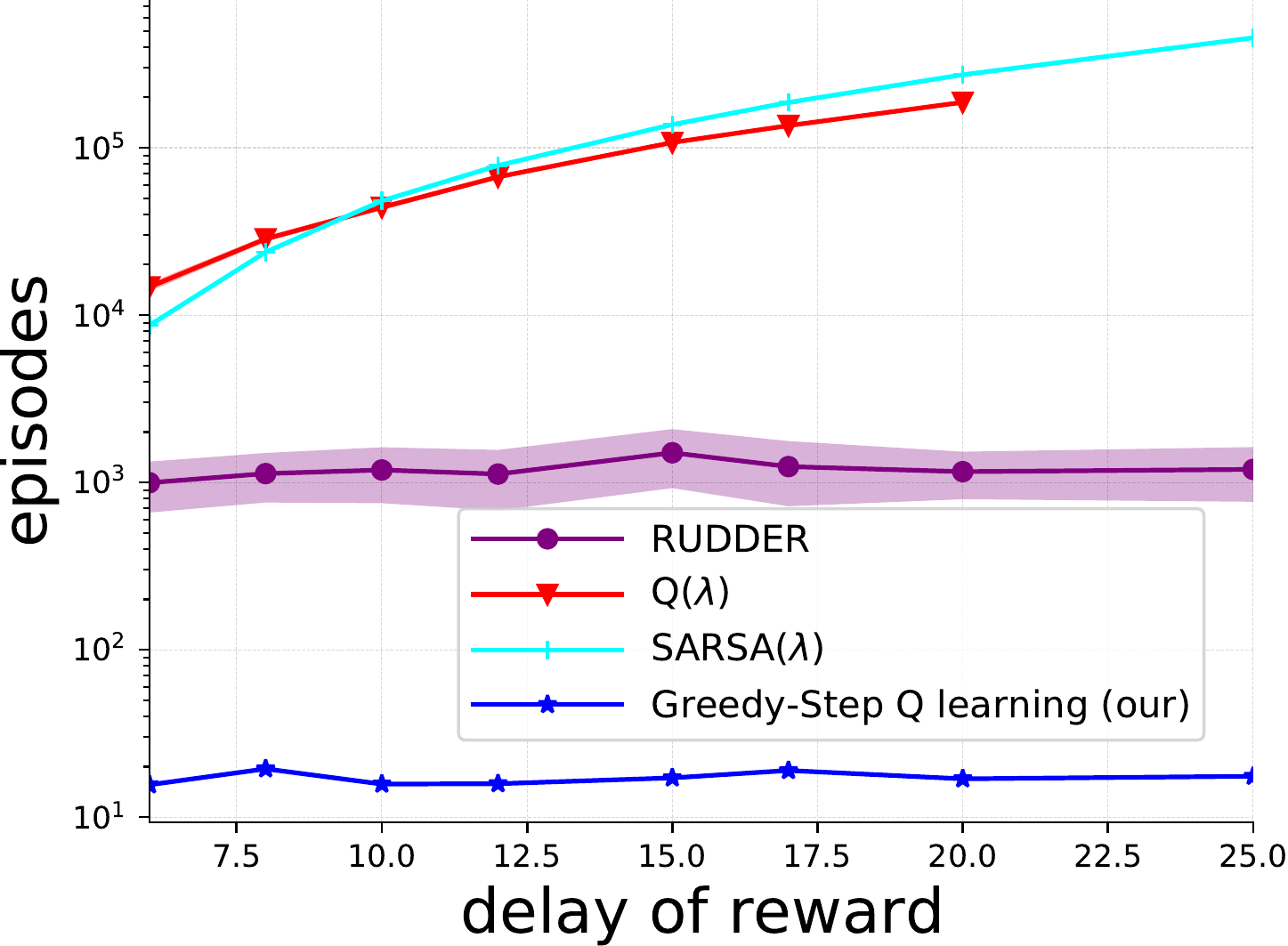}
   		}\hfill
   		   		\subfloat[Grid World]{
   		   			\label{fig_GridWorld_size}
   					\includegraphics[width=\widthproperty\linewidth]{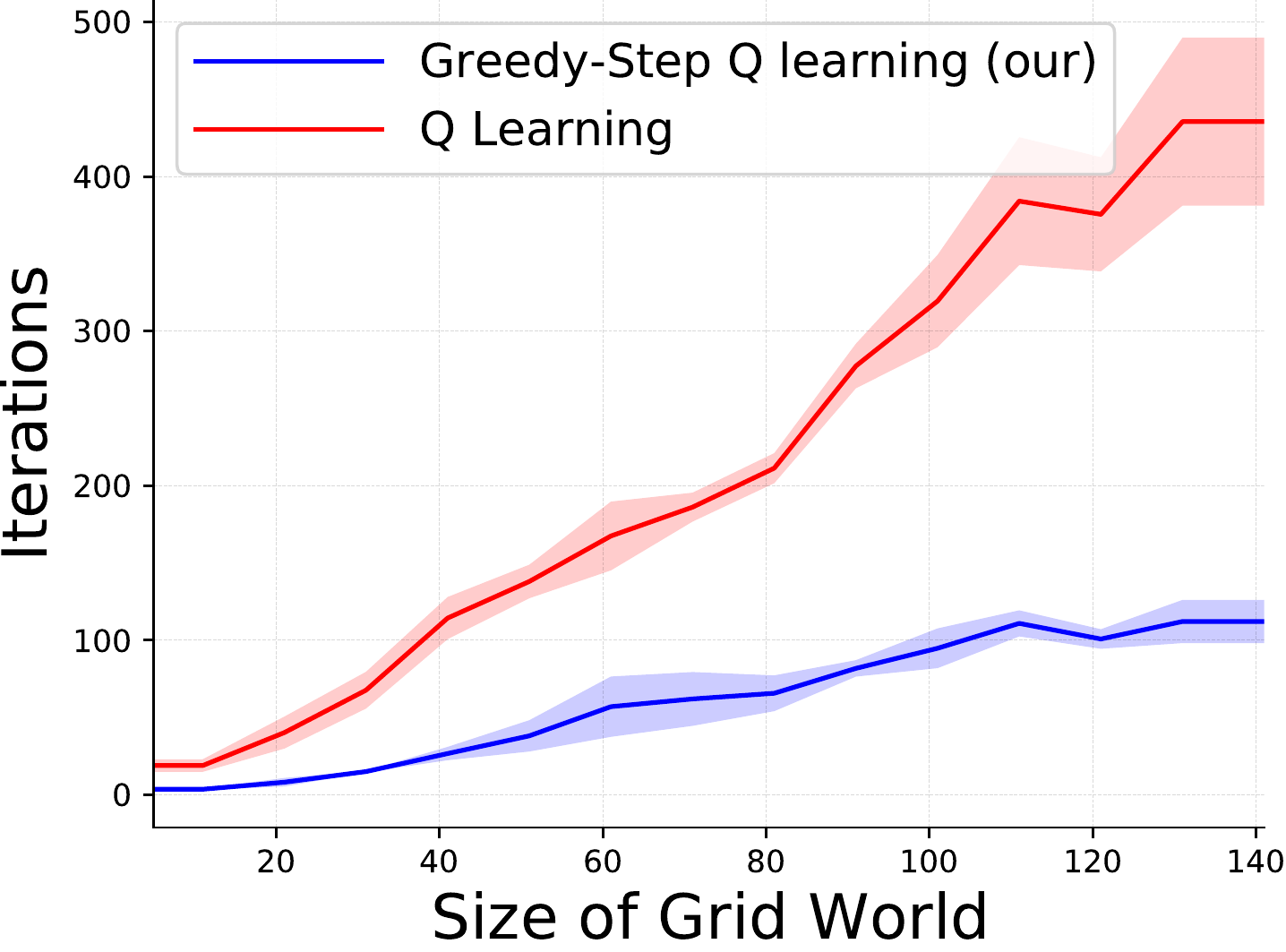}
   		   		}
   		  	   	}
%   	\centerline{
%   	}
    \caption{
    	Results on three toy examples: Choice, Trace Back, and Grid World.
%    	TODO: more explanation
    }\label{fig_toy_example}
\end{figure}
\fi

\textbf{Toy Tasks.}
We first evaluate algorithms on three toy tasks to understand the behavior of the new algorithms.
The algorithm are run with 100 trials on these tasks.

We first evaluate on two tasks with delayed rewards, provided in \cite{NEURIPS2019_16105fb9}.
Both the environments only provide a reward at the end of the game.
The final reward is associated with the previous actions.
%In task ``Choice'', the stochastic reward depends on the first action at the beginning.
For example, in task ``Trace Back'', only when the agent executes the exact two actions at the first two steps, then will it be provided with a highest score.
%the final reward depends on the first two actions.
%TODO:  See \cite{NEURIPS2019_16105fb9} or our APPENDIX \ref{app-sec_toy_example} for more detail.
The algorithms are evaluated until the task is solved.
We compare our method with traditional eligibility trace methods and RUDDER \cite{NEURIPS2019_16105fb9}.
%Both the tasks delay the reward for a long horizon after executing the .
As shown in \Cref{fig_toy_example} (a) and (b), our method significantly outperforms all the algorithms on both the tasks.
For example, on Trace Back, our method requires only 20 episodes to solve the task, while the best algorithm RUDDER requires more than 1000.
Notably, the cost episode of our \greedyStepQLearning/ do not observably increase as the ``delay'' increases.
In contrast, other methods such as Q($\lambda$) require exponentially increasing episodes.
This is because our algorithm update the values by looking forward in time until the end of the game. Thus the final reward can fast propagate to the previous actions across a long horizon.

\ifISWORD
\else
\begin{figure*}[!b]
    \centering
   	\def\widthproperty{0.33}
%   	trim={<left> <lower> <right> <upper>}
	\includegraphics[trim={5cm 1cm 0 0},width=1.\linewidth]{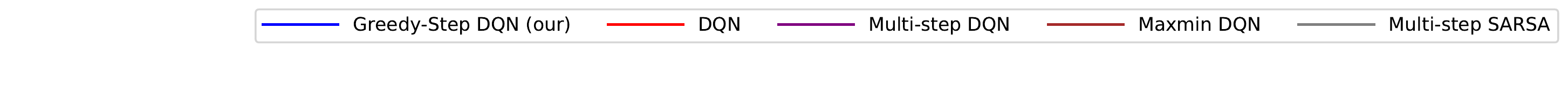}
   	\centerline{
	   	\includegraphics[width=\widthproperty\linewidth]{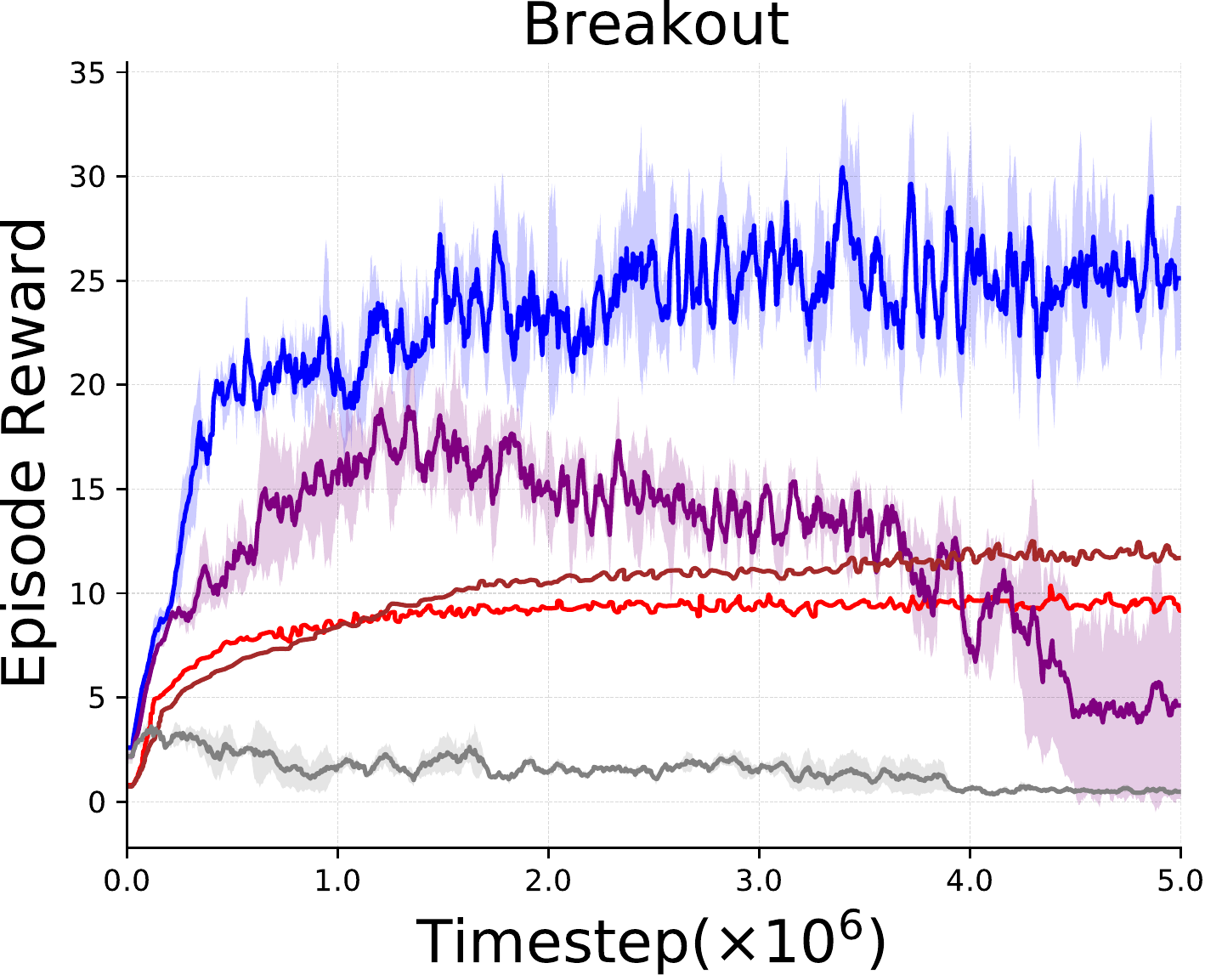}
	   	\includegraphics[width=\widthproperty\linewidth]{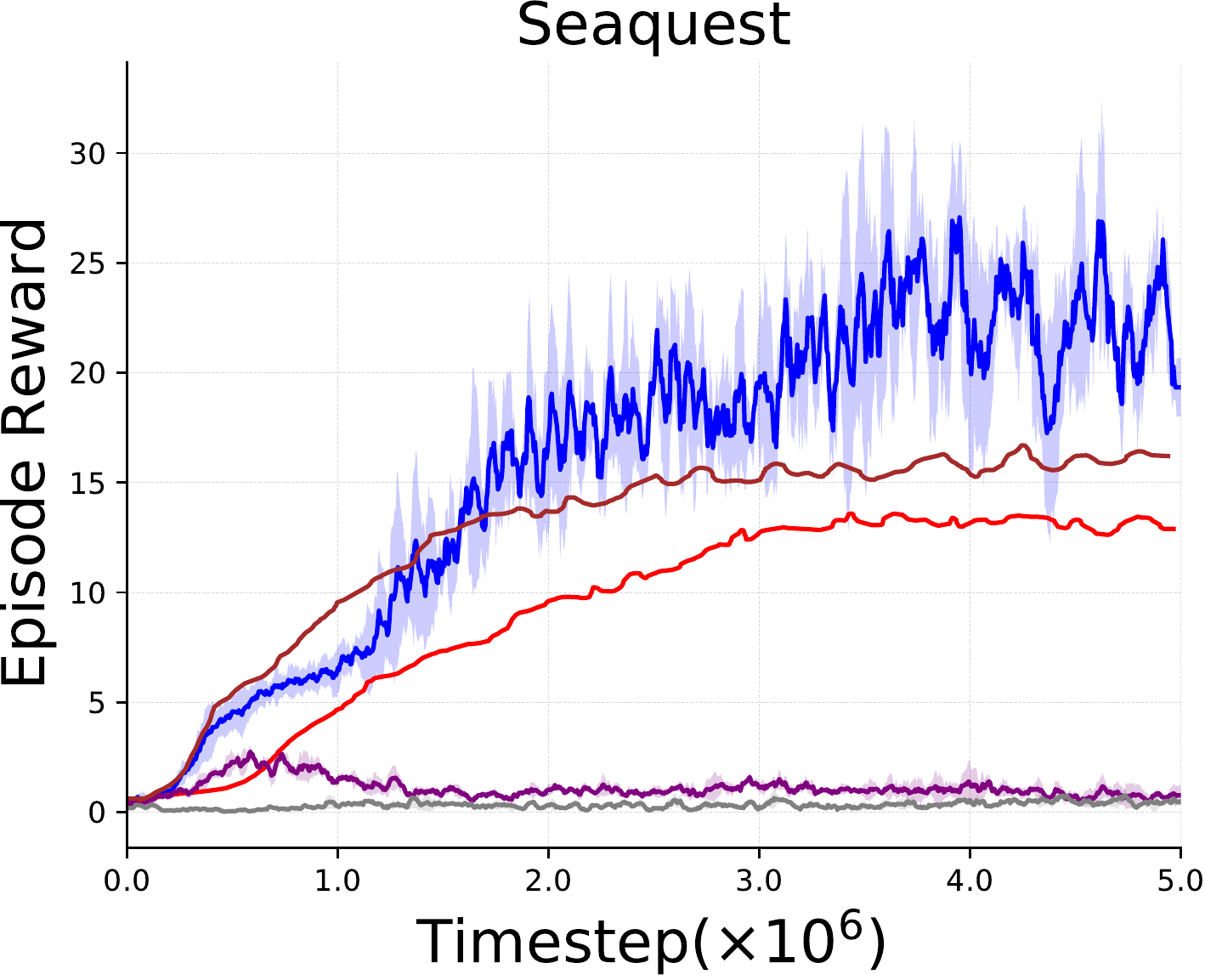}
	   	\includegraphics[width=\widthproperty\linewidth]{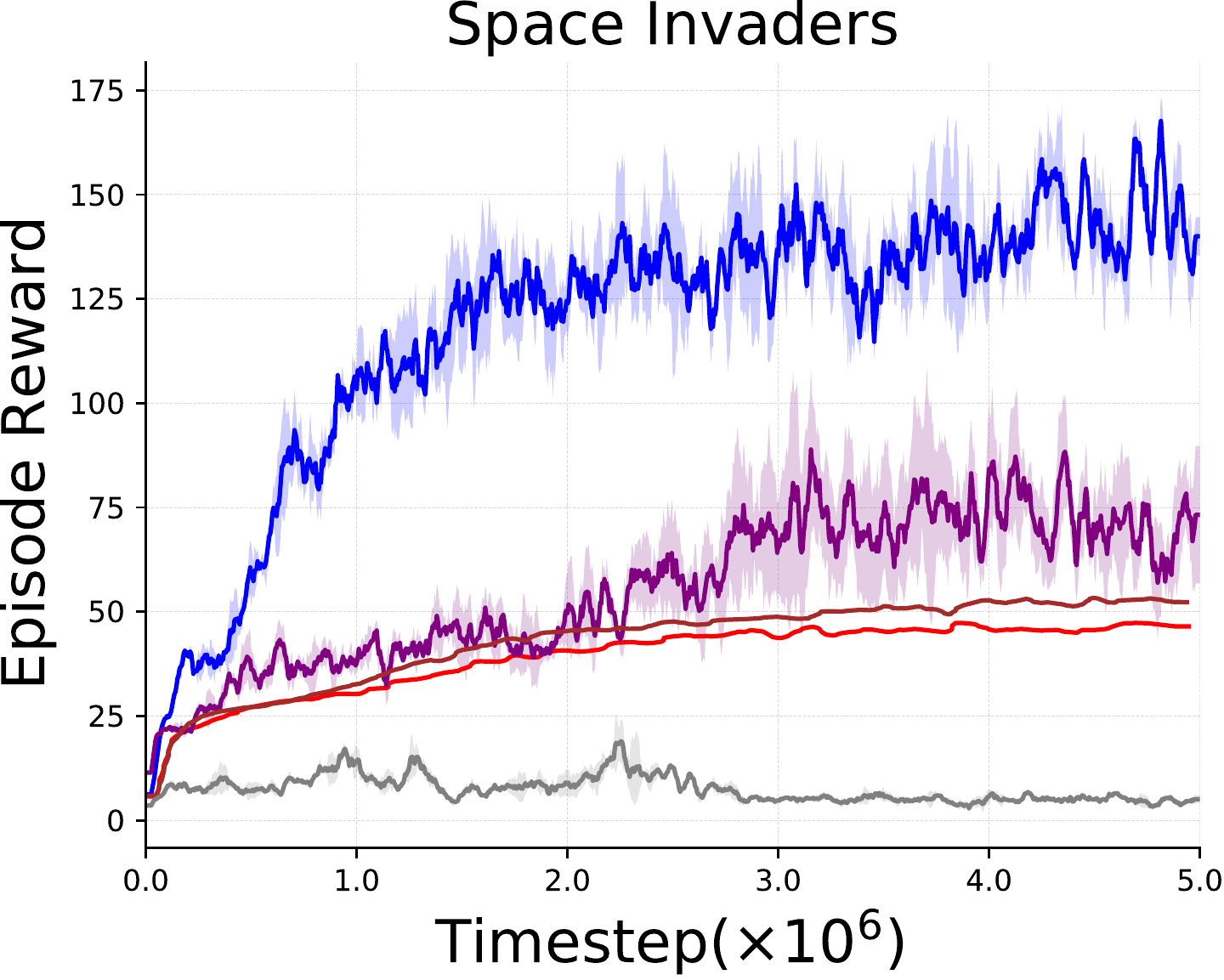}
   	}
  	\centerline{
	   	\includegraphics[width=\widthproperty\linewidth]{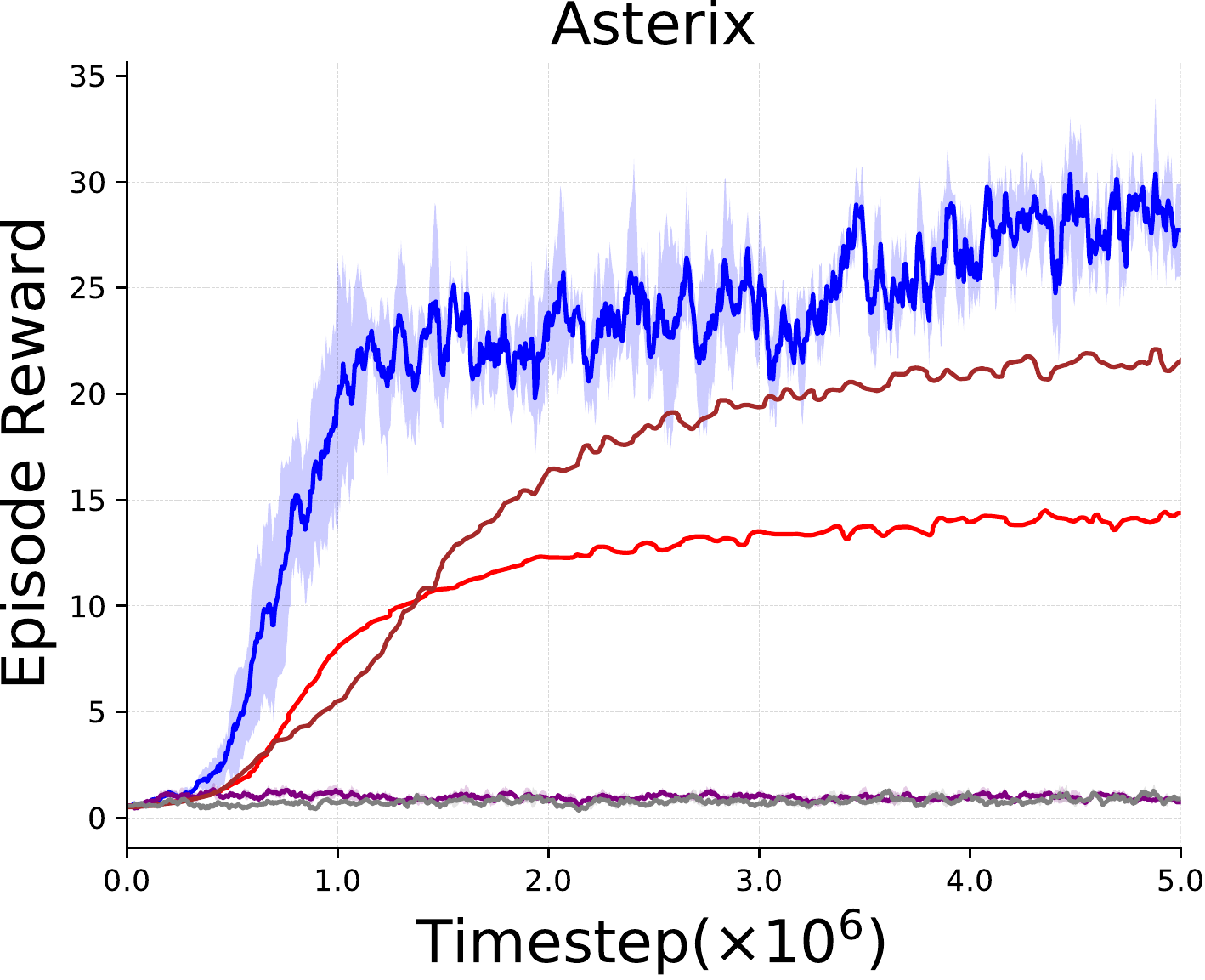}
	   	\includegraphics[width=\widthproperty\linewidth]{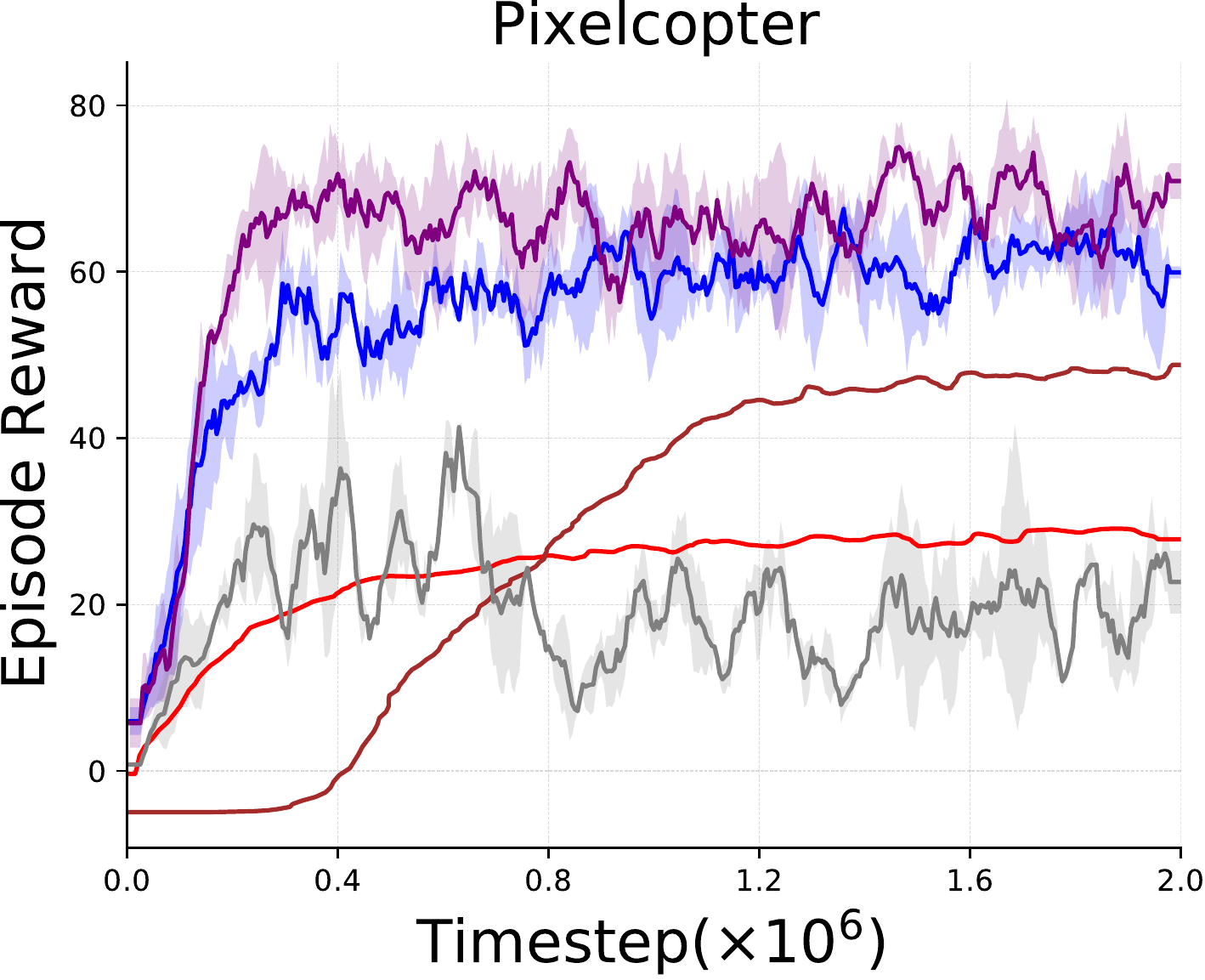}
	   	\includegraphics[width=\widthproperty\linewidth]{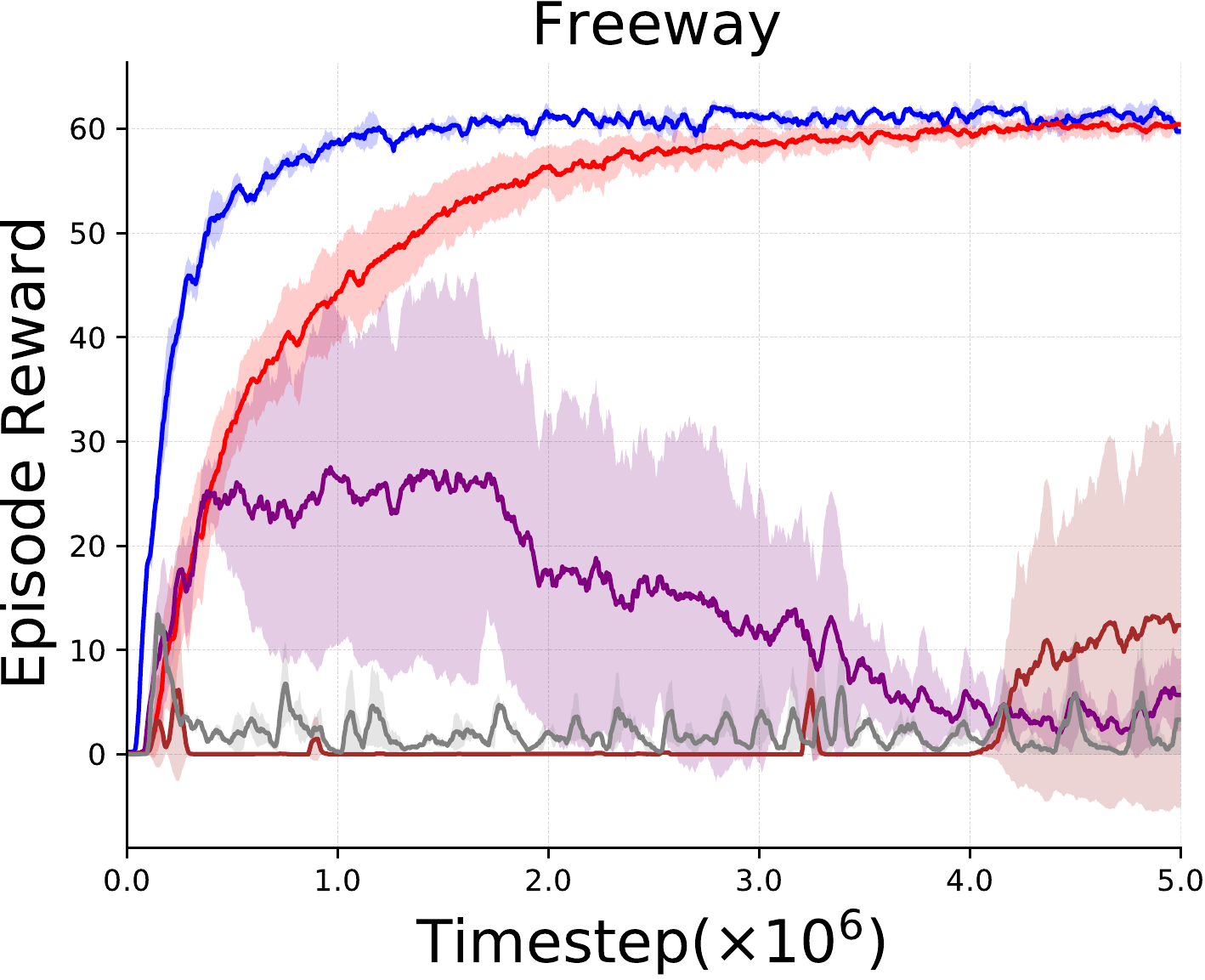}
  	}
    \caption{
 		  Episode rewards of the algorithms during the training process averaged over \nrandomseed/ random seeds.
%		  The results of Maxmin DQN and DQN on  are \T{copied} from \cite{x}.
    }\label{fig_score_minatar_greedy_step}
\end{figure*}
\fi

\textbf{MinAtar Games.}
We then evaluated algorithms on benchmark tasks from Gym \citep{Brockman2016OpenAI}, PyGame Learning Environment (PLE) \citep{tas}, and MinAtar \citep{Kyo}.
Each algorithm was run with \nrandomseed/ random seeds. 
The policies are evaluated occasionally during the training process.

\Cref{fig_score_minatar_greedy_step} shows the performance of the algorithms. 
\greedyStepDQN/ significantly outperforms almost all the compared algorithms in both reward and sample efficiency on almost all the tasks except Pixelcopter.
Especially on tasks like Breakout, Seaquest, and Space Invaders, our \greedyStepDQN/ achieves almost twice the reward of the best of the compared methods.
On tasks like Pong, MountainCar, and Freeway, although several algorithms converge to the same level as our \greedyStepDQN/, these algorithms required more than twice the samples to converge than our \greedyStepDQN/ does.

% TODO: 解释下结果

%achieves almost twice the reward of the best of the compared methods.
%On tasks like Pong, MountainCar, and Freeway, although several algorithms converge to the same level as our \greedyStepDQN/, these algorithms required more than twice the samples to converge than our \greedyStepDQN/ does.
%TOD: table
%For example, on the mountain car task, \greedyStepQLearning/ reaches a reward of $-160$ within $2 \times 10 ^4$ timesteps. While multi-step Q learning, which performs best among remaining tasks, achieves a reward $-160$ with \T{twice} the timesteps of \greedyStepQLearning/.
%For example, the mountain car task will provide a reward only when the car drives up the mountain on the right (while the car's engine is not strong enough to scale the mountain in a single pass).
%\greedyStepQLearning/ outperforms other algorithms on these sparse reward tasks.
%These results implies the superiority of \greedyStepQLearning/ resolving sparse reward challenge.

% Mountaicar 
% We have also provided a result of CartPole task, .

We then evaluate the algorithms in an offline setting to avoid the impact of exploration.
We collect several trajectory data by random policies in advance. 
The algorithms are trained with these data without additional collection.
%And these data are utilized by the algorithms.
We make the experiment on a Grid World task. The agent receives a reward of $-1$ at each step until it reaches the goal.
As can be shown in \Cref{fig_GridWorld_size}, as the size of Grid World increases, the performance gap between our method and Q learning becomes larger.
% TODO: 说下其它方法不行

\subsection{Algorithm Behavior}\label{sec_ablation_study}

%
%As shown in \Cref{fig_performance_multi_step_Qlearning}, with proper step size setting, multi-step Q learning performs better than one-step Q learning. 
%For $16$-step Q learning (purple dashed line), although it failed to achieve a higher score at the end, it reach a higher reward than other-step Q learning at the beginning ($1\times 10^4$ timestep).
%These results implies the efficiency of long-horizon information propagation of multi-step learning. 
%However, vanilla multi-step highly depends on the behavior policy, restricting it from achieving further success.
%
%As shown in \Cref{fig_performance_multi_step_Qlearning}, multi-step Q learning performs better with step size $8$ on the mountain car task, while step size $4$ on the acrobot task.
%It requires parameter tuning for each task.
%Furthermore, the step sizes for multi-step Q learning is fixed for each state-action pair during the learning process, while those of our \greedyStepQLearning/ is adaptively adjusted.
%As shown in \Cref{fig_stepsize} (Left), 
%at the later stage (after $2 \times 10^4$ timesteps), the chosen step sizes increase as the quality of data improves (shown by the increased score of \greedyStepQLearning/ in \Cref{fig_performance_algorithms}).
%\Cref{fig_stepsize} (Right) plots the distribution of the chosen step size. 
%Larger step sizes are less chosen than the smaller ones.

\ifISWORD
\else
\begin{figure}[!h]
    \centering
   	\def\widthproperty{0.25}
   	\centerline{
  	  		   	\includegraphics[trim={0 1cm 0 0},width=1.\linewidth]{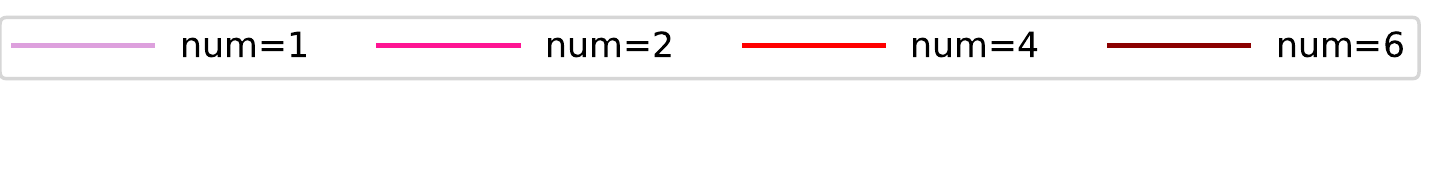}
  	  		   	}
  	   	\centerline{
			\includegraphics[width=\widthproperty\linewidth]{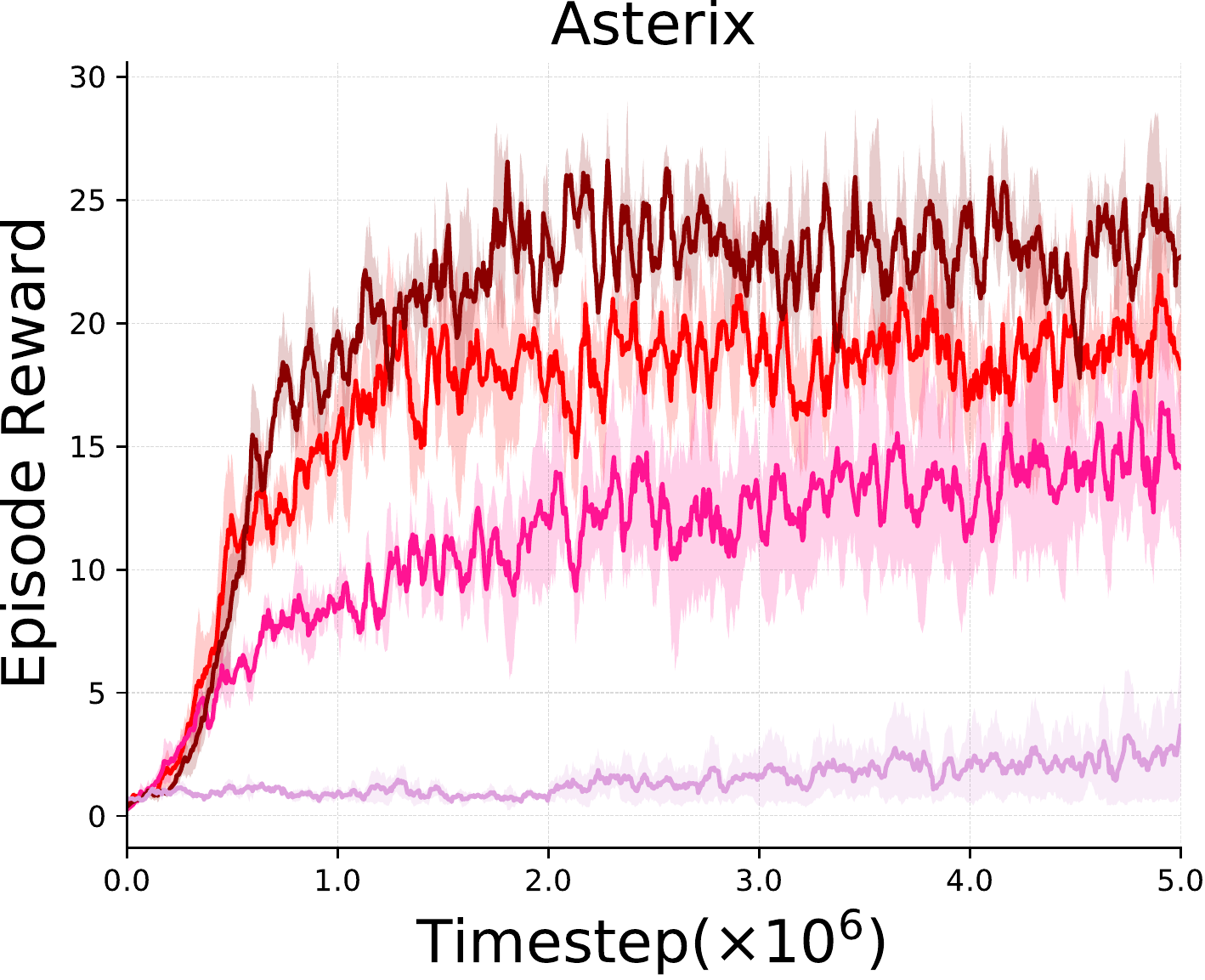}
			\includegraphics[width=\widthproperty\linewidth]{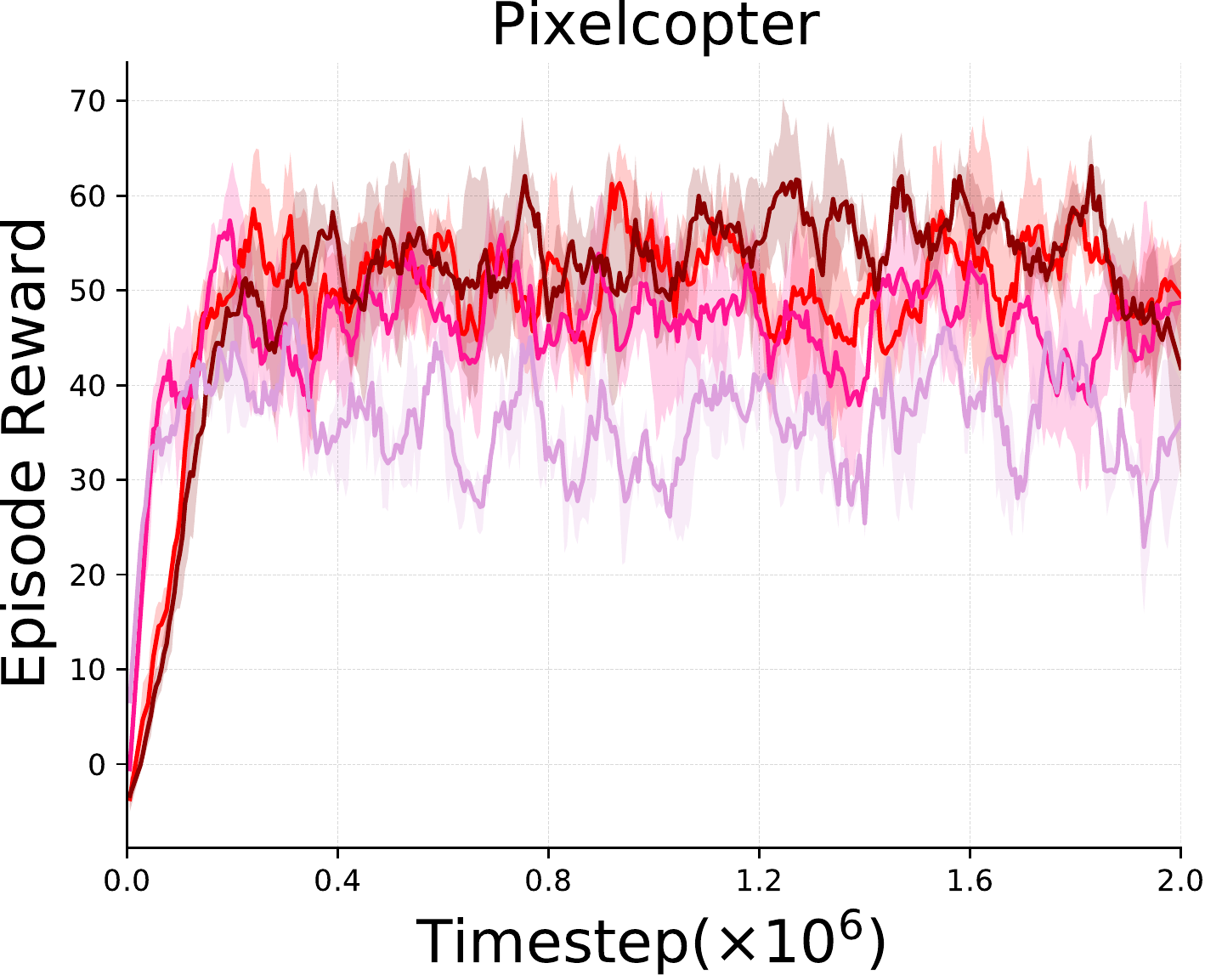}
			\includegraphics[width=\widthproperty\linewidth]{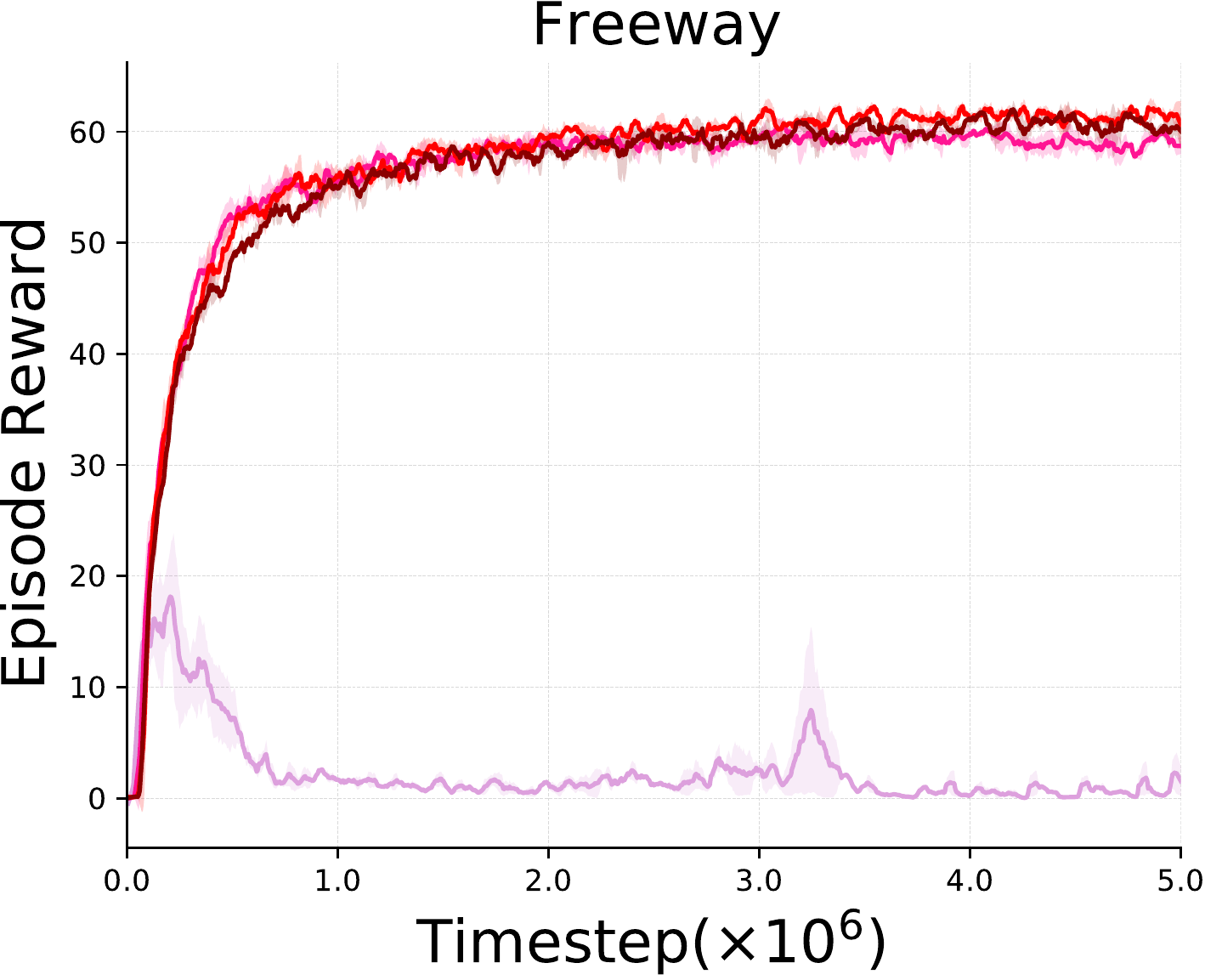}
  	   	}
  	   	\centerline{
			\includegraphics[width=\widthproperty\linewidth]{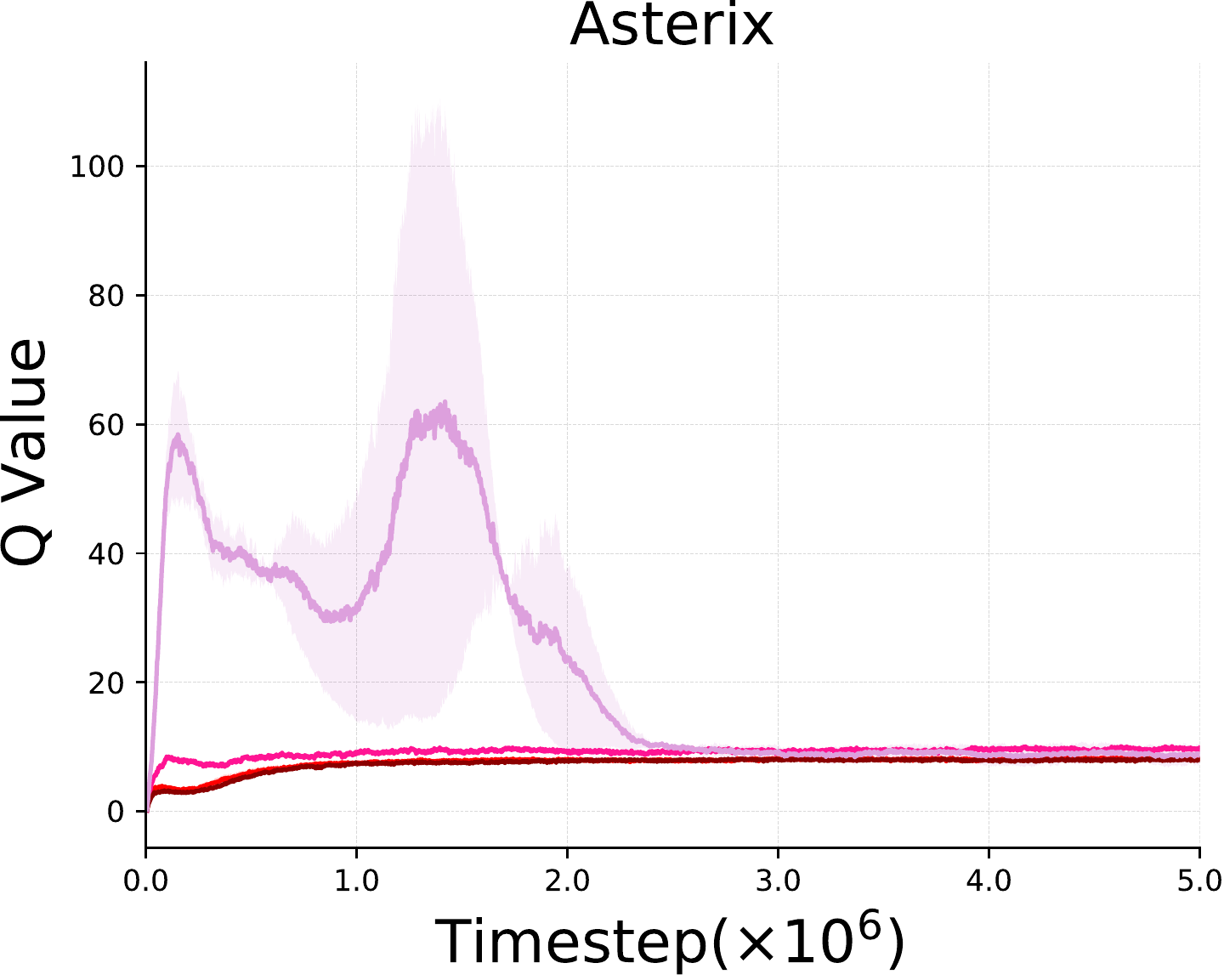}
			\includegraphics[width=\widthproperty\linewidth]{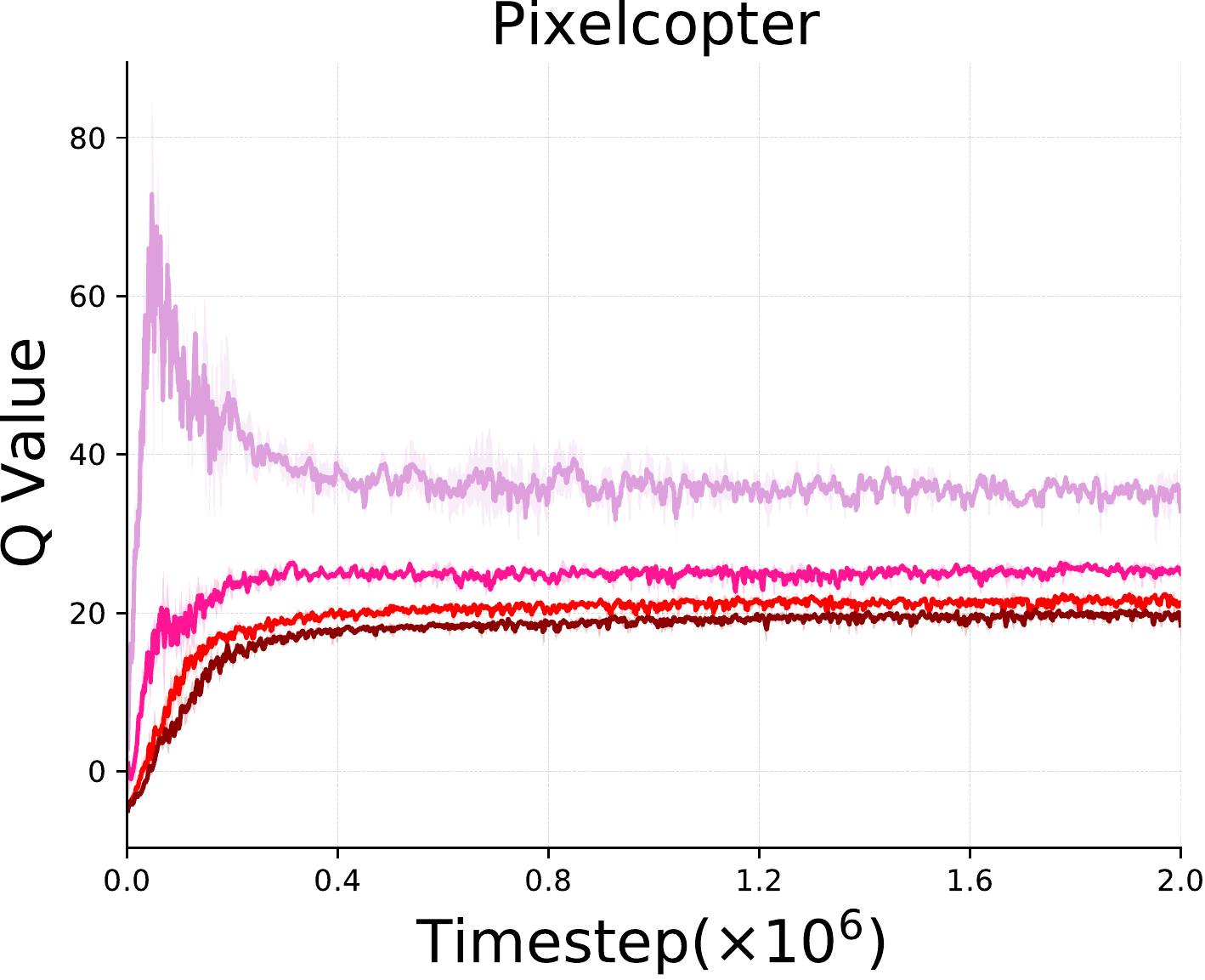}
			\includegraphics[width=\widthproperty\linewidth]{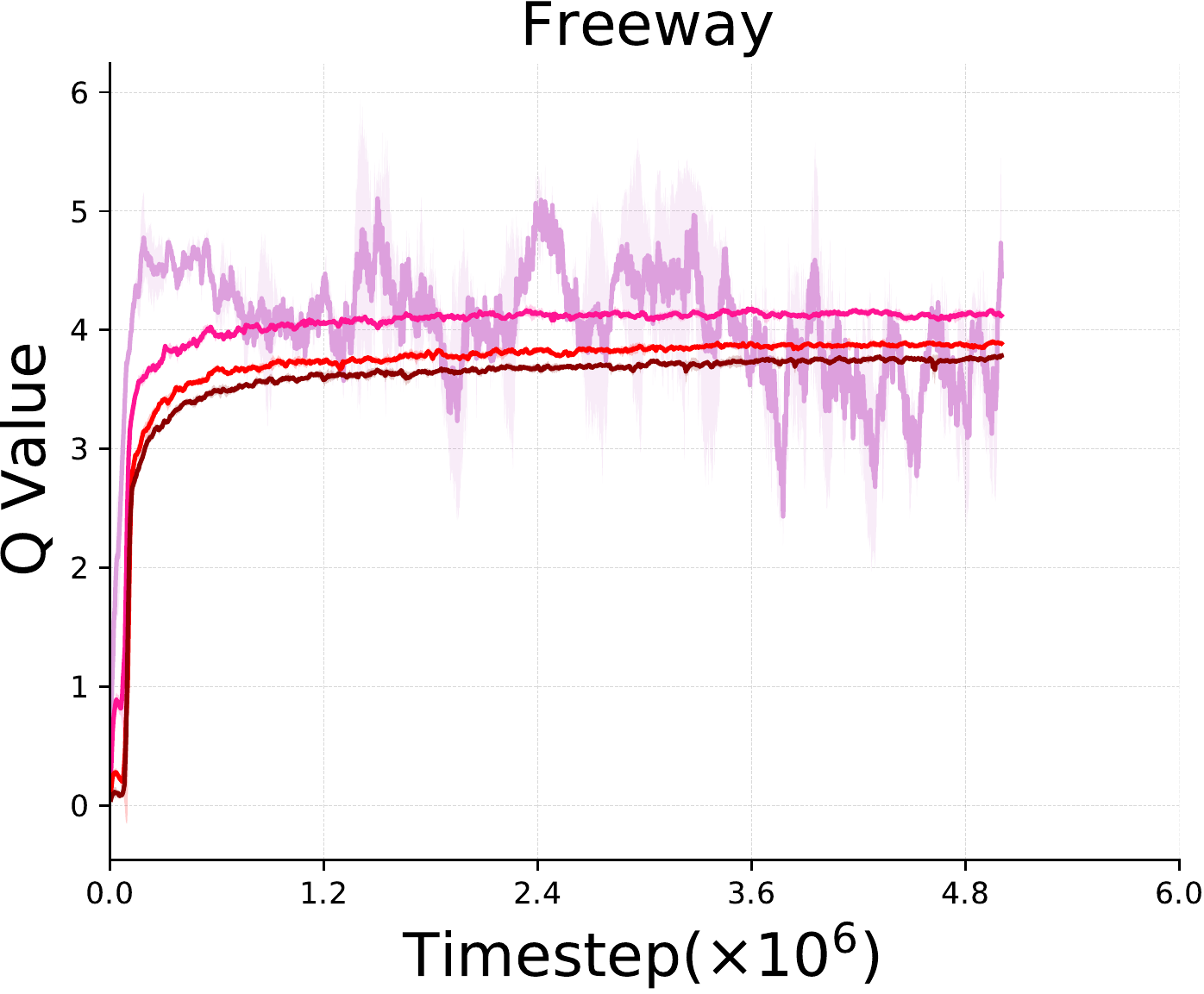}
  	   	}
%   	\centerline{
%   	}
    \caption{
		(Top) \textbf{The episode rewards} and (Bottom) \textbf{the mean of the estimated $Q$ value} of \greedyStepDQN/ with varying number of target networks during the training process.
%		 during the training process of \greedyStepDQN/ with varying number of target networks.
%		TODO:MountainCar more seeds
    }\label{fig_overestimation}
\end{figure}
\fi

\textbf{Overestimation.}
The maximization over various returns can easily result in an overestimation issue.
One important component to the success of \greedyStepDQN/ is the technique of overestimation reduction.
%Another key to the success of \greedyStepDQN/ is the technique of overestimation reduction. 
In this paper, we employ Maxmin DQN, which setups several target networks and use the minimal one as the target.
As shown in \Cref{fig_overestimation}, when the number of target networks is sufficiently large (num=6), the algorithm performs better and more stable, and the Q value is relatively smaller. In general, we found that in practice by setting the number of target networks to be 6, our \greedyStepDQN/ can often obtain a good performance.

%The estimated Q values are extremely large without Double DQN.
%when the number of target networks is sufficiently large (num=6), the algorithm performs better and more stable, and the Q value is relatively smaller. 
%In general, we found that in practice by setting the number of target networks to be 6, our \greedyStepDQN/ can often obtain a good performance.
%is that it uses the technique of Maxmin DQN to relieve .
%Another key to the success of \greedyStepDQN/ is that it uses the technique of Maxmin DQN to relieve .

%\ifshowfig

\ifISWORD
\else
\begin{figure}[!h]
    \centering
       	\def\widthproperty{0.33}
%       	\subfigure[caption]{file}
%     			\begin{subfigure}[]{}
%					\includegraphics[width=\widthproperty\linewidth]{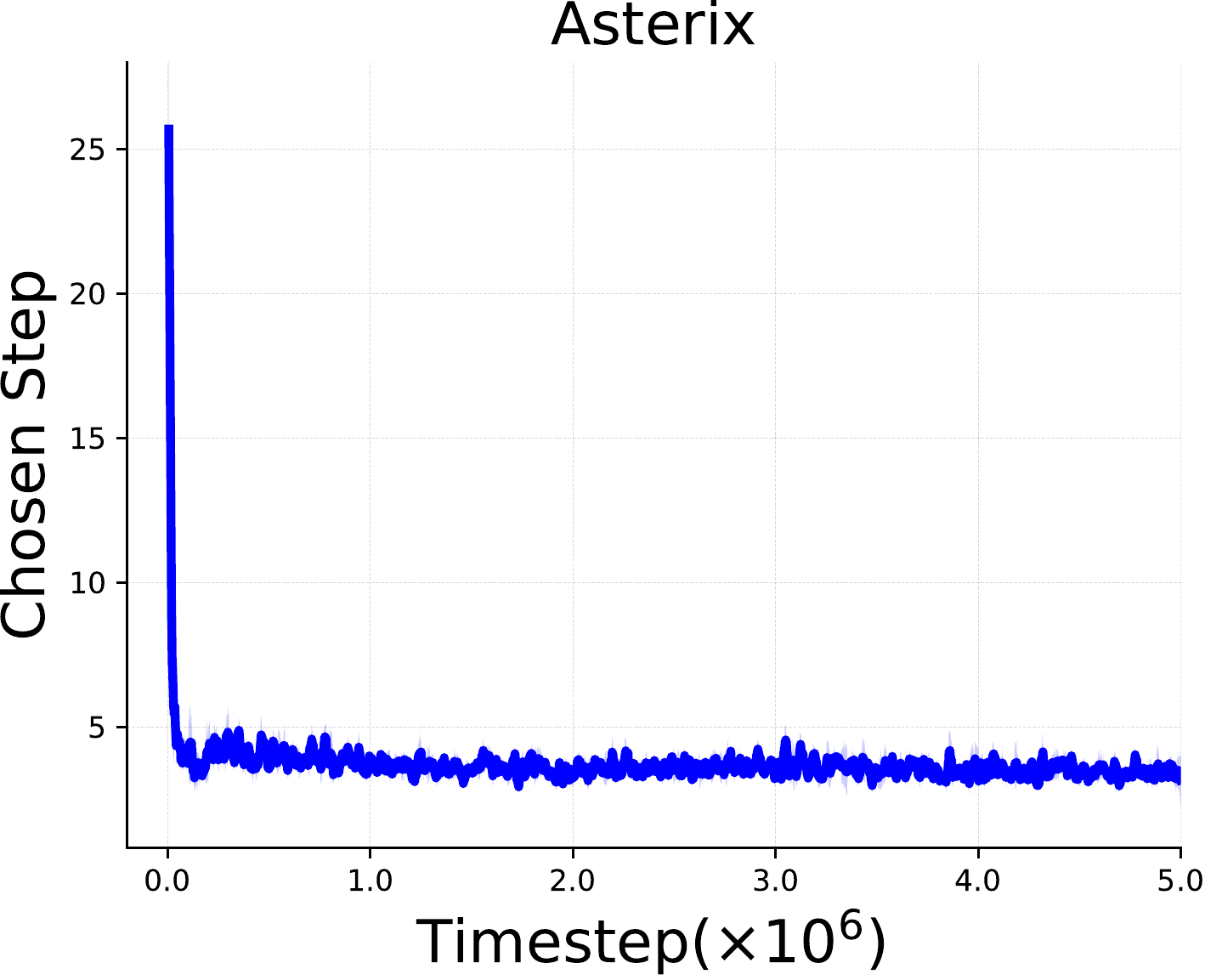}
%     				\caption{Iteration 2}\label{}
%     			\end{subfigure}
%TOD:subfigure
      	\centerline{
    	   	\includegraphics[width=\widthproperty\linewidth]{figs/chosenstep/Asterix.pdf}
   	   		\includegraphics[width=\widthproperty\linewidth]{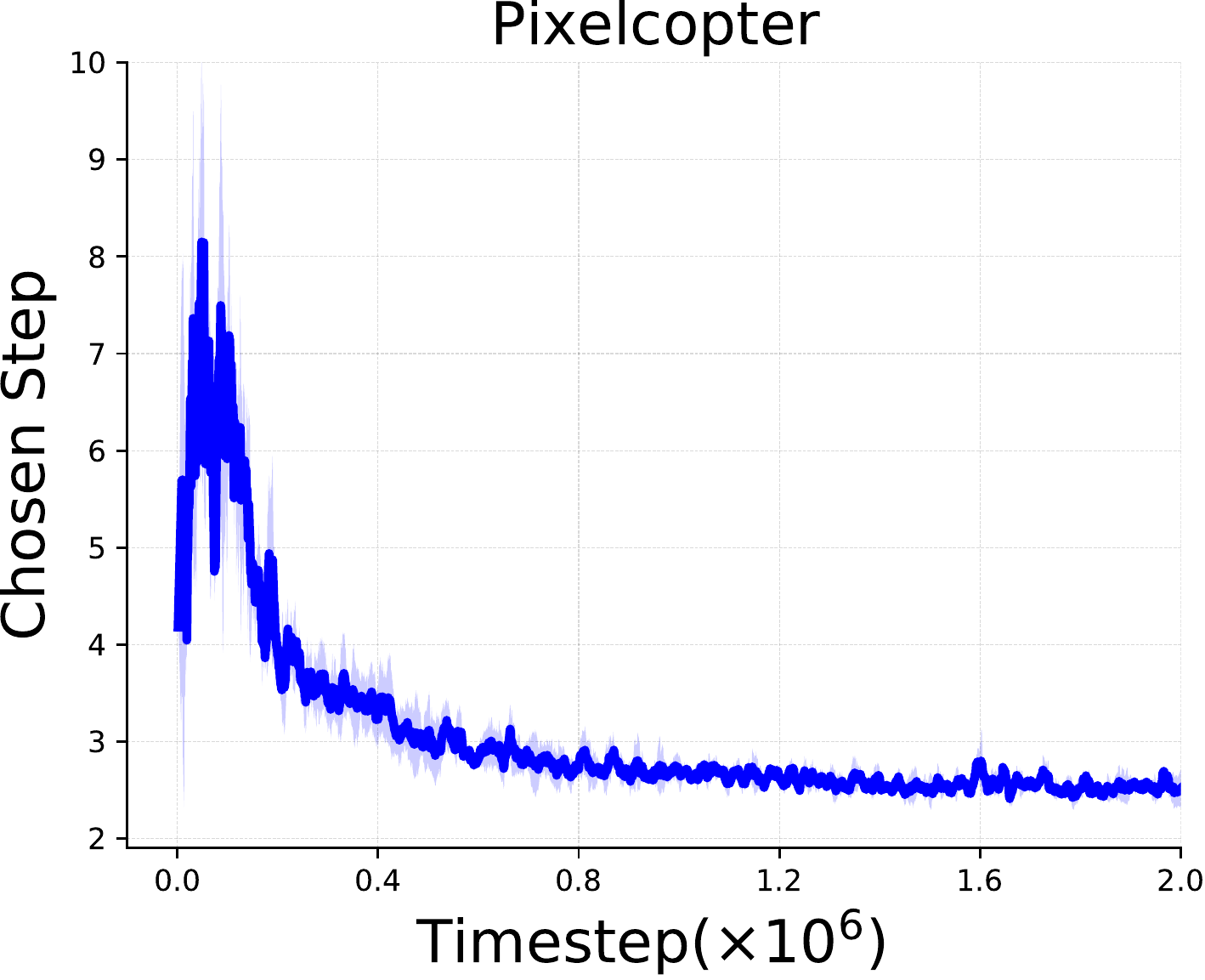}
    	   	\includegraphics[width=\widthproperty\linewidth]{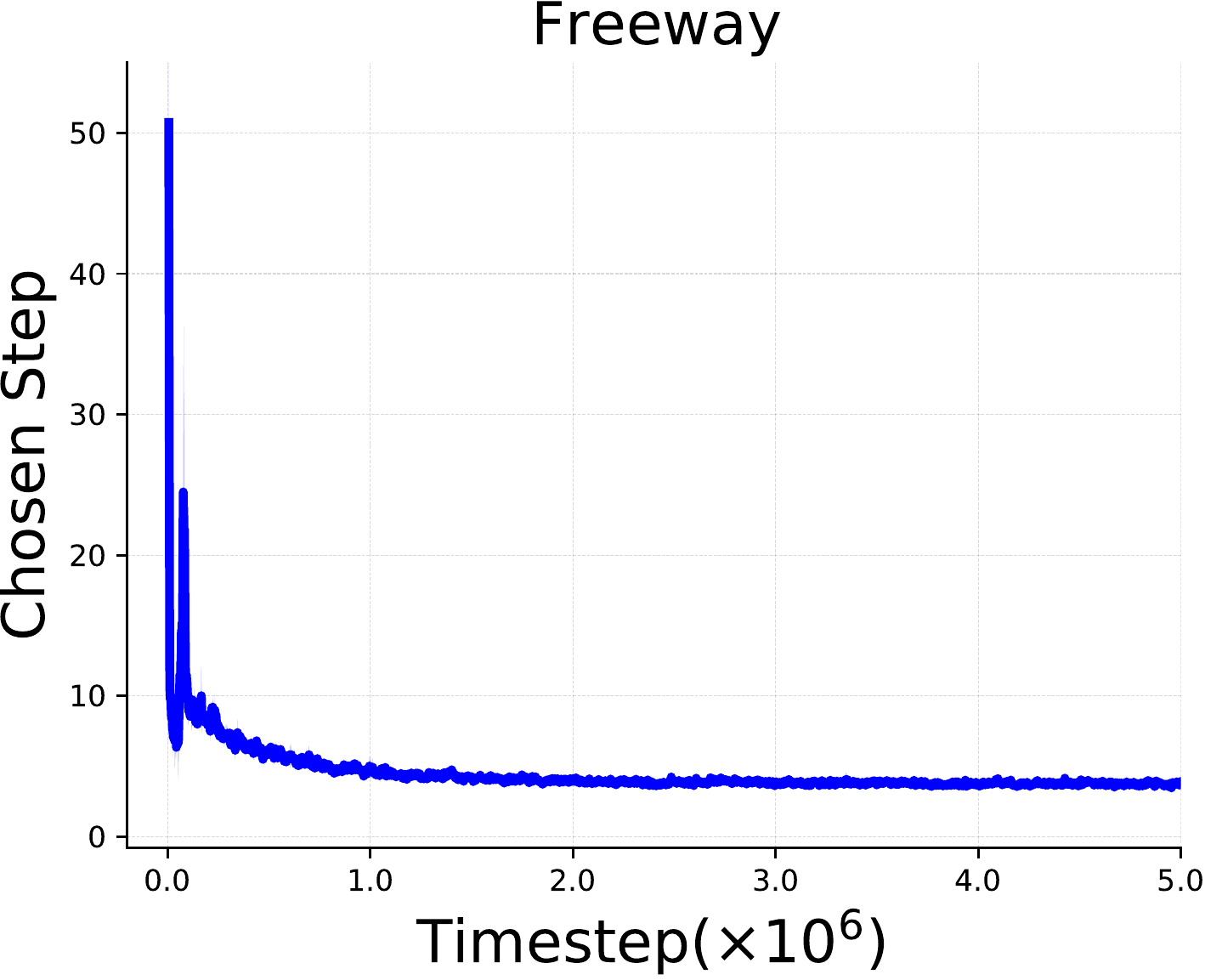}
      	}
      	\centerline{
	    	   	\includegraphics[width=\widthproperty\linewidth]{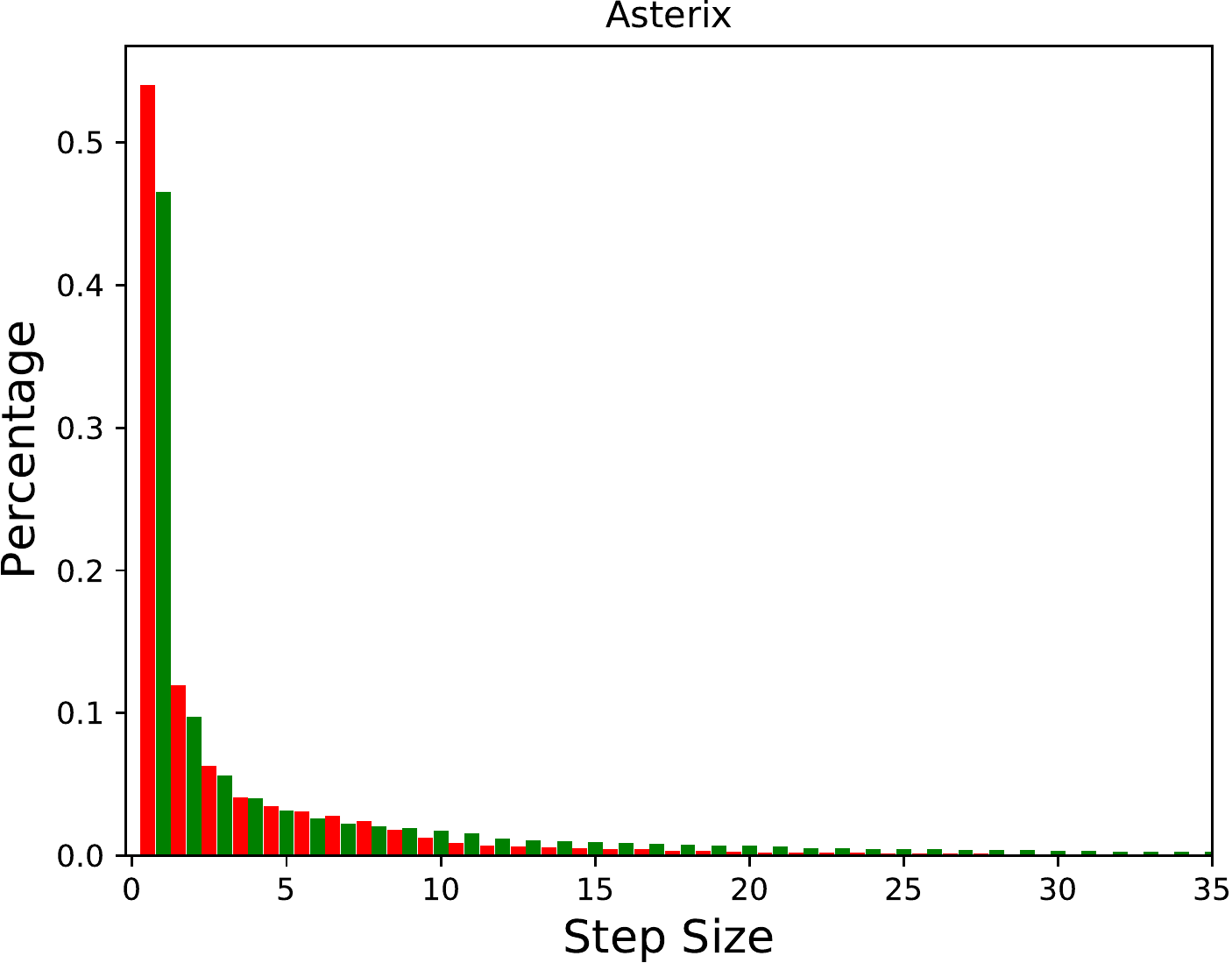}
	   	   		\includegraphics[width=\widthproperty\linewidth]{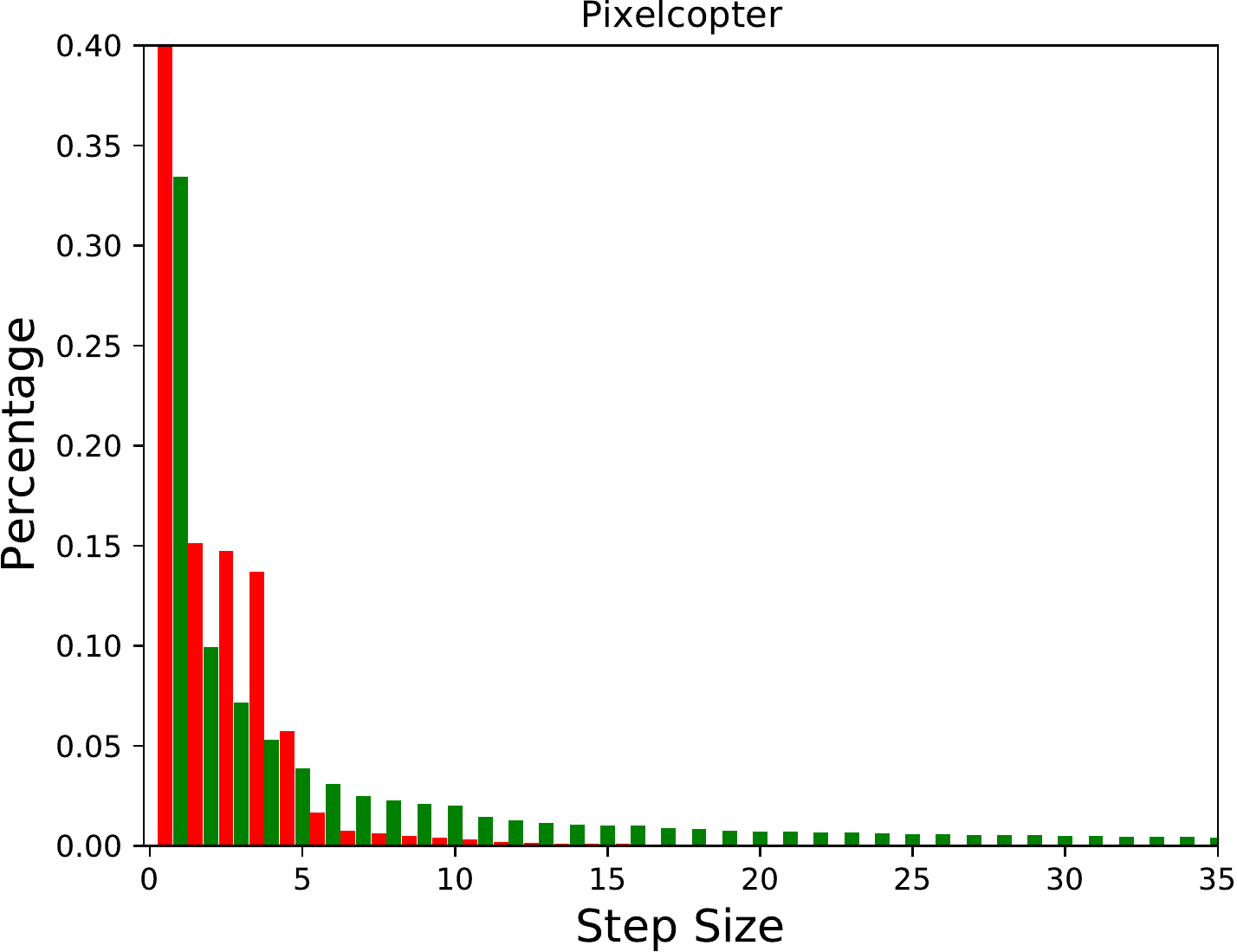}
	    	   	\includegraphics[width=\widthproperty\linewidth]{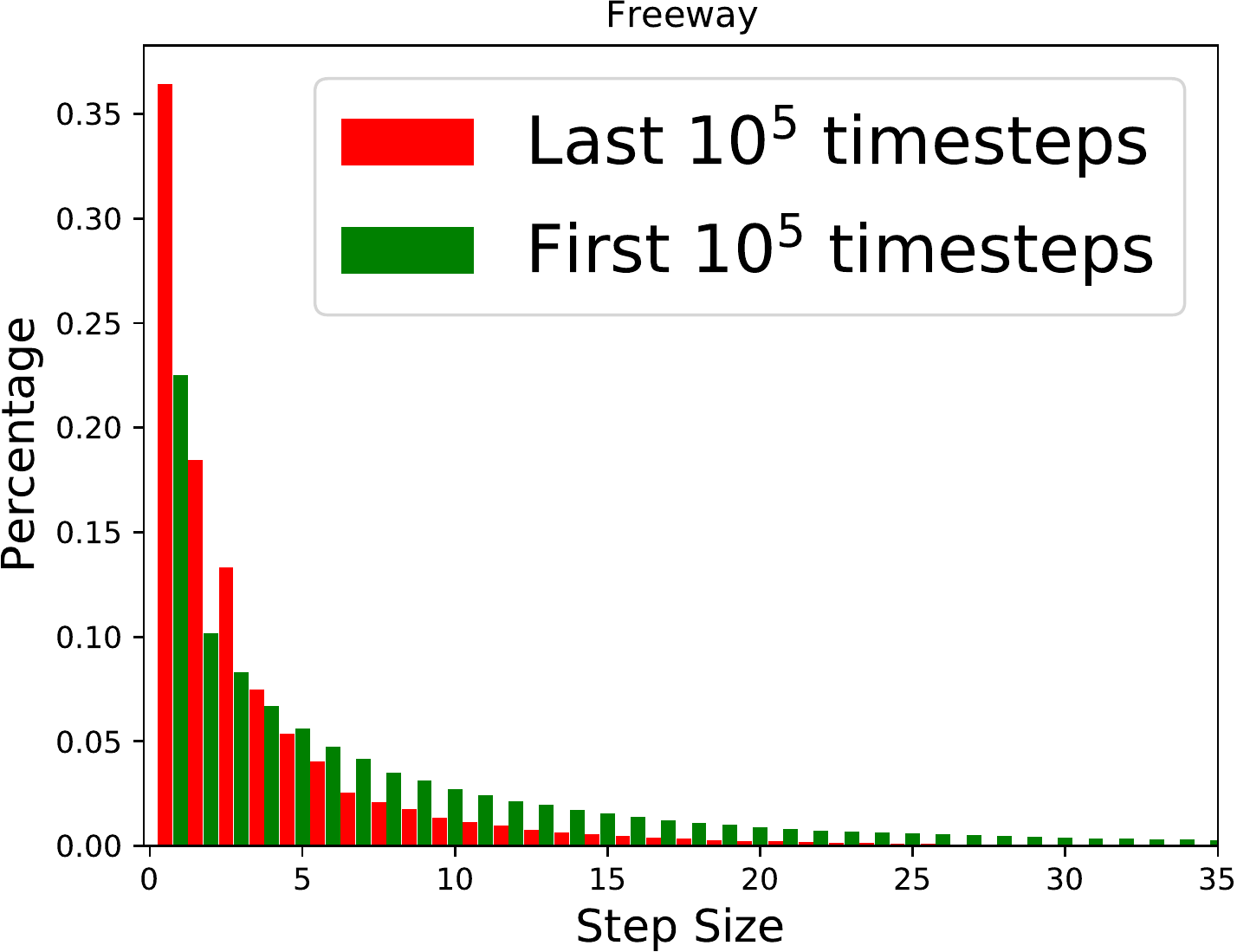}
      	}
	\ifshowfig
    \caption{
    	(Top) \textbf{The mean of chosen \steps/}, i.e., $\argmax_{ N' } \nstepReturn[N'][\tau]$, of \greedyStepDQN/ during training process. 
    	(Bottom) \textbf{The statistics of the chosen \steps/} of \greedyStepDQN/. We make statistics of the chosen \step/ size over the first $10^5$ timesteps (green) and and the last $10^5$ timesteps (red) during training process. 
%    	The largest \step/ can achieve a value of 80, we present the complete results in Appendix \ref{app-sec_experiment_results}.
    }
    \else
        \caption{
%        The largest \step/ can achieve a value of 80, we present the complete results in Appendix \ref{app-sec_experiment_results}.
        }
    \fi\label{fig_stepsize}
\end{figure}
\fi

\textbf{Adaptive \step/.}
%The success of the improved efficiency of 
Our \greedyStepDQN/ can adaptively decide the \step/ based on the ``optimality'' principle, that is, $\argmax_{ n } \nstepReturn[n][\tau]$.
\Cref{fig_stepsize} (Top) shows the mean of chosen \steps/ of \greedyStepDQN/ during the training process, i,e, $\argmax_{ N' } \nstepReturn[N'][\tau]$.
\Cref{fig_stepsize} (Bottom) plots the statistics of chosen \steps/ at different phase of training process. 
As can be seen, at the beginning training phase, the algorithm tends to choose relatively larger \steps/. \T{While at the latter training phase, shorter \steps/ is preferred.}
This mechanism can make the algorithm fast propagate the information in the data at the beginning.
%, and the algorithm can perform more efficient (as can be shown in \Cref{fig_performance_algorithms}).
While at the latter training phase, as the value function has fitted well and no better trajectory data is provided, the chosen \steps/ are getting smaller automatically.

%While at the latter training phase, it tends to choose shorter step.
%This is because that the algorithm uses .
%As can be shown \Cref{fig_stepsize} (Bottom), in general, the algorithm tends .

% --- BEGIN multi-step DQN no space
%The vanilla multi-step DQN also outperforms DQN on several tasks like Pong and Pixelcopter (as can be shown in \Cref{fig_performance_algorithms}), which also implies the effect of multi-step learning. 
%However, it failed on tasks like Seaquest, Asterix, and Freeway. Furthermore, it needed to tune the hyperparameter step. The best hyperparameter step on Pong is 16, while that on Pixelcopter is 4.
% --- END

%random initialization.
%both our method and Q learning.

%By checking the chosen \step/ of \greedyStepQLearning/, we found that 
%When the intial $Q$ is sufficient large, \greedyStepQLearning/ tends to choose $1$-step boostrap, which means that it degenerate to Q learning.
%This is because $\nstepReturn$ 
%the required iterations of the two algorithms are getting closer.

\section{Conclusion}
In this work, we introduce two novel multi-step Bellman Optimality Equations for efficient information propagation.
We prove that the solution of the equations is the optimal value function, and the corresponding operators generally converge faster than the traditional Bellman Optimality operator.
The derived algorithms have several advantages than the existing off-policy algorithms.
The feasibility and effectiveness of the proposed method have been demonstrated on a series of standard benchmark datasets with promising results.

\bibliography{GreedyMultistep}
\bibliographystyle{plain}

\ifISWORD
\else
\clearpage
\includepdfmerge{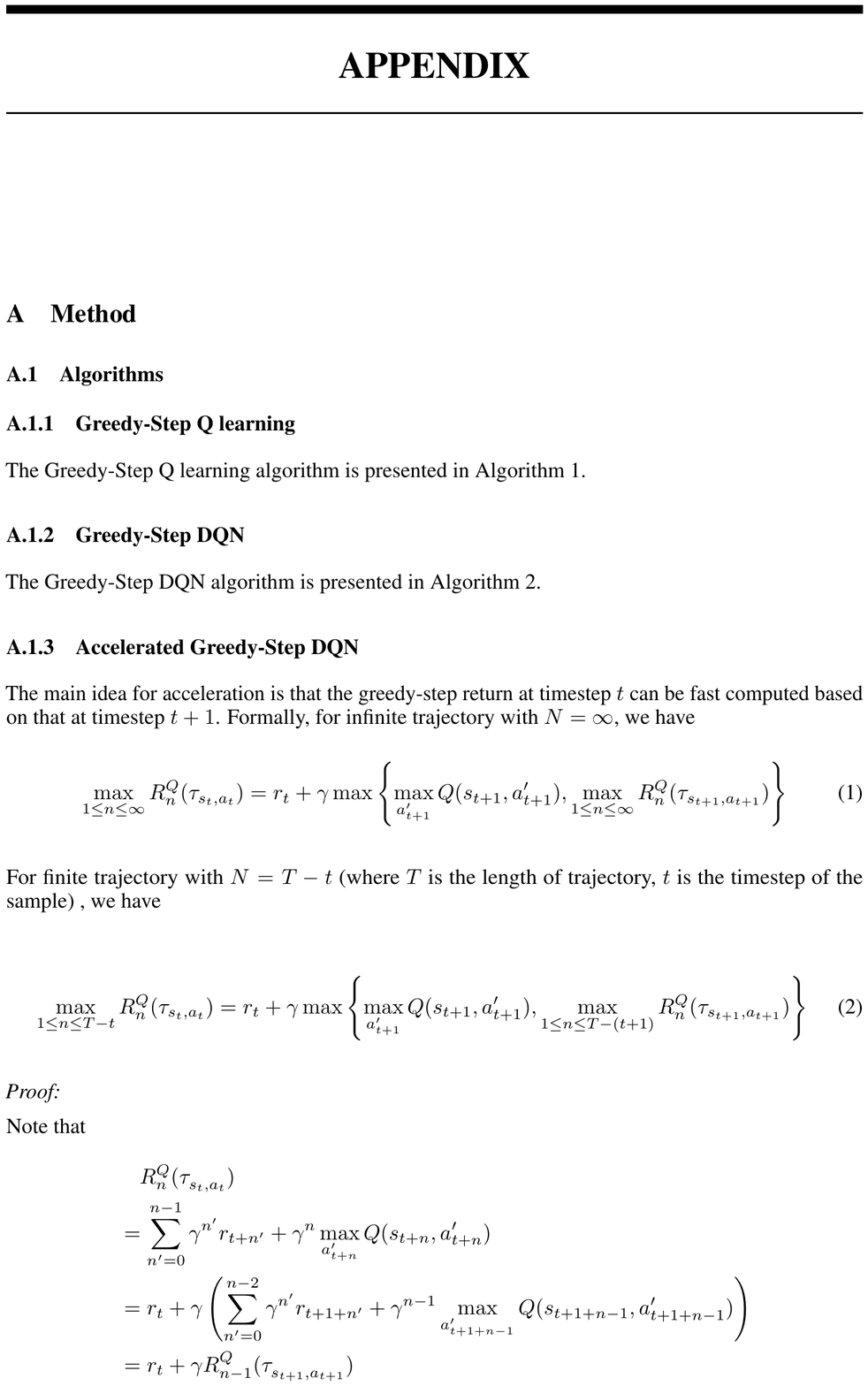,-}    
\fi

\end{document}